\documentclass{article}




     \usepackage[final, nonatbib]{neurips_2019}


\usepackage[utf8]{inputenc} 
\usepackage[T1]{fontenc}    
\usepackage{hyperref}       
\usepackage{url}            
\usepackage{booktabs}       
\usepackage{amsfonts}       
\usepackage{nicefrac}       
\usepackage{microtype}      
\usepackage{subcaption}


\bibliographystyle{apalike}

\usepackage{etoolbox}
\usepackage{mathrsfs}
\usepackage{diagbox}
\usepackage{algorithm}
\usepackage{algcompatible}
\usepackage{algpseudocode}

\usepackage{mathtools}
\usepackage{color}
\usepackage{xfrac,amsfonts,amsbsy}
\usepackage[titletoc]{appendix}
\usepackage{xspace}
\usepackage{paralist}
\usepackage{amsthm}
\usepackage{bm}
\usepackage{bbm}
\usepackage{subcaption}

\theoremstyle{definition}
\newtheorem{definition}{Definition}[section]

\newtheorem{theorem}{Theorem}
\newtheorem{lemma}[theorem]{Lemma}

\newtheorem{corollary}[theorem]{Corollary}

\usepackage{enumitem}

\makeatletter

\newcommand{\Rmnum}[1]{\expandafter\@slowromancap\romannumeral #1@}
\makeatother




\DeclareMathOperator*{\argmin}{arg\,min}
\DeclareMathOperator*{\argmax}{arg\,max}


\newcommand{\thetatest}{\ensuremath{\theta^{\text{test}}}}
\newcommand{\agentone}{agent $\mathcal A^x$}
\newcommand{\agenttwo}{agent $\mathcal A^y$}

\newcommand{\woagentone}{$\mathcal A^x$}
\newcommand{\woagenttwo}{$\mathcal A^y$}

\newcommand{\pool}{\ensuremath{\textnormal{\textsc{Pool}}}}

\newcommand{\mdpM}{\ensuremath{\mathcal{M}}}
\newcommand{\rmax}{\ensuremath{r_{\textnormal{max}}}}

\newcommand{\E}{\mathbb E}

\newcommand{\bss}[1]{\left[{#1}\right]}
\newcommand{\bracket}[1]{\left[#1\right]}

\newcommand{\algcontroller}{\textsc{{AdaptPool}}\xspace}
\newcommand{\algdqn}{\textsc{{AdaptDQN}}\xspace}








\newcommand{\norm}[1]{\left\lVert#1\right\rVert}


\title{Towards Deployment of Robust AI Agents \\ for Human-Machine Partnerships}
\author{
    Ahana Ghosh\\
    MPI-SWS\\
    \texttt{gahana@mpi-sws.org}
    \And
    Sebastian Tschiatschek\\
    Microsoft Research\\
    \texttt{setschia@microsoft.com}
    \AND
    Hamed Mahdavi\\
    MPI-SWS\\
    \texttt{hmahdavi@mpi-sws.org}
    \And
    Adish Singla\\
    MPI-SWS\\
    \texttt{adishs@mpi-sws.org}
}

\begin{document}
\maketitle

\newtoggle{longversion}
\settoggle{longversion}{false}

\begin{abstract}
We study the problem of designing AI agents that can robustly cooperate with people in human-machine partnerships.  Our work is inspired by real-life scenarios in which an AI agent, e.g., a virtual assistant, has to cooperate with new users after its deployment. We model this problem via a parametric MDP framework where the parameters correspond to a user's type and characterize her behavior. In the test phase, the AI agent has to interact with a user of unknown type. Our approach to designing a robust AI agent relies on observing the user's actions to make inferences about the user's type and adapting its policy to facilitate efficient cooperation. We show that without being adaptive, an AI agent can end up performing arbitrarily bad in the test phase. We develop two algorithms for computing policies that automatically adapt to the user in the test phase. We demonstrate the effectiveness of our approach in solving a two-agent collaborative task.
\end{abstract}
\section{Introduction}

An increasing number of AI systems are deployed in human-facing applications like autonomous driving, medicine, and education~\cite{yu2017towards}.
In these applications, the human-user and the AI system (agent) form a partnership, necessitating mutual awareness for achieving optimal results~\cite{hadfield-menell16cooperative,wilson2018collaborative,Amershi2019Guidelines}. 
For instance, to provide high utility to a human-user, it is important that an AI agent can account for a user's preferences defining her behavior and act accordingly, thereby being adaptive to the user's type~\cite{DBLP:conf/hri/NikolaidisRGS15,DBLP:journals/corr/NikolaidisFHSS17,Amershi2019Guidelines,tschiatschek2019learner,haug2018teaching}.
As a concrete example, an AI agent for autonomous driving applications should account for a user's preference to take scenic routes instead of the fastest route and account for the user's need for more AI support when driving manually in confusing situations.




AI agents that do not account for the user's preferences and behavior typically degrade the utility for their human users.
However, this is challenging because the AI agent needs to (a) infer information about the interacting user and (b) be able to interact efficiently with a large number of different human users, each possibly showing different behaviors.
In particular, during development of an AI agent, it is often only possible to interact with a limited number of human users and the AI agent needs to generalize to new users after deployment (or acquire information needed therefore quickly).
This resembles multi-agent reinforcement learning settings in which an AI agent faces unknown agents at test time~\cite{grover18learning} and the cold-start problem in recommender systems~\cite{Bobadilla2012ColdStart}.


In this paper, we study the problem of designing AI agents that can robustly cooperate with new unknown users for human-machine partnerships in reinforcement learning (RL) settings after deployment.
In these problems, the AI agent often only has access to the reward information during its development while no (explicit) reward information is available once the agent is deployed.
As shown in this paper, an AI agent can only achieve high utility in this setting if it is adaptive to its user while a non-adaptive AI agent can perform arbitrarily bad.
We propose two adaptive policies for our considered setting, one of which comes with strong theoretical robustness guarantees at test time, while the other is inspired by recent deep-learning approaches for RL and is easier to scale to larger problems.
Both policies build upon inferring the human user's properties and leverage these inferences to act robustly.

Our approach is related to ideas of multi-task, meta-learning, and generalization in reinforcement learning.
However, most of these approaches require access to reward information at test time and rarely offer theoretical guarantees for robustness (see discussion on related work in Section~\ref{sec.relatedwork}).
Below, we highlight our main contributions:\\[-1.3em]
\begin{itemize} 
    \item We provide an algorithmic framework for designing robust policies for interacting with agents of unknown behavior. 
    Furthermore, we prove robustness guarantees for approaches building on our framework.\\[-1.5em]
    \item We propose two policies according to our framework: \algcontroller{} which pre-computes a set of best-response policies and executes them adaptively based on inferences of the type of human-user; and \algdqn{} which implements adaptive policies by a neural network in combination with an inference module.\\[-1.5em]
    \item We empirically demonstrate the excellent performance of our proposed policies when facing an unknown user.
\end{itemize}








\section{The Problem Setup}\label{sec.setup}

We formalize the problem through a reinforcement learning (RL) framework. The agents are hereafter referred to as \agentone~and \agenttwo: here, \agenttwo~represents the AI agent whereas \agentone~could be a person, i.e., human user. Our goal is to develop a learning algorithm for \agenttwo~that leads to high utility even in cases when the behavior of \agentone~and its committed policy is unknown. 
\subsection{The model}\label{sec.setup.model}
We model the preferences and induced behavior of \agentone~via a parametric space $\Theta$.
From \agenttwo's perspective, each $\theta \in \Theta$ leads to a parameterized MDP $\mathcal{M}(\theta) := (S, A, T_{\theta}, R_{\theta}, \gamma, \mathcal{D}_0)$ consisting of the following:
\begin{itemize} 
\item a set of states $S$, with $s \in S$ denoting a generic state.
\item a set of actions $A$, with $a \in A$ denoting a generic action of \agenttwo. 
\item a transition kernel parameterized by $\theta$ as $T_{\theta}(s'~|~s, a)$, which  is a tensor with indices defined by the current state $s$, the \agenttwo's action $a$, and the next state $s'$.  In particular, $T_{\theta}(s'~|~s, a)=\E_{a^x}[T^{x,y}(s'~|~s, a, a^x)]$, where $a^x \sim \pi^x_{\theta}(\cdot~|~s)$ is sampled from \agentone{}'s policy in state $s$. That is, $T_{\theta}(s'~|~s, a)$ corresponds to the transition dynamics derived from a two-agent MDP with transition dynamics $T^{x,y}$ and \agentone{}'s policy $\pi^x_{\theta}$.
\item a reward function parameterized by $\theta$ as $R_{\theta}\colon  S \times A \rightarrow [0, \rmax]$ for $\rmax > 0$.  This captures the preferences of \agentone{} that \agenttwo{} should account for.
\item a discount factor $\gamma \in [0, 1)$ weighing short-term rewards against long-term rewards.
\item an initial state distribution $\mathcal{D}_0$. 
\end{itemize}
Our goal is to develop a learning algorithm that achieves high utility even in cases when  $\theta$~is unknown. In line with the motivating applications discussed above, we consider the following two phases:
\begin{itemize}
    \item {\bfseries Training (development) phase.} During development, our learning algorithm can iteratively interact with a limited number of different MDPs $\mathcal{M}(\theta)$ for $\theta \in \Theta^{\textrm{train}} \subseteq \Theta$: here, \agenttwo~can observe rewards as well as \agentone's actions needed for learning purposes.
    \item {\bfseries Test (deployment) phase.} After deployment, our learning algorithm interacts with a parameterized MDP as described above for unknown $\thetatest \in \Theta$: here, \agenttwo~only observes \agentone's actions but not rewards. 
\end{itemize}

\subsection{Utility of \agenttwo}
For a fixed policy $\pi$ of  \agenttwo, we define its total expected reward in the MDP $\mathcal{M}_\theta$ as follows:
\begin{align}
J_\theta(\pi) = \E \bracket{ \sum_{\tau = 1}^\infty \gamma^{\tau-1} R_{\theta}(s_{\tau}, a_{\tau}) ~|~ \mathcal{D}_0, T_{\theta}, \pi},
\end{align}
where the expectation is over the stochasticity of policy $\pi$ and the transition dynamics $T_\theta$.
Here $s_{\tau}$ is the state at time $\tau$.
For $\tau = 1$, this comes from 
the distribution $\mathcal{D}_0$.

\paragraph{For known $\theta$.}
%
When the underlying parameter $\theta$ is known, the task of finding the best response policy of \agenttwo{} reduces to the following:\\
\begin{align}
\pi^*_{\theta} = \argmax_{\pi \in \Pi}  J_\theta(\pi)
\end{align}
where $\Pi = \{ \pi \mid \pi \colon S \times A \rightarrow \bss{0,1} \}$ defines the set of stationary Markov policies.

\paragraph{For unknown $\theta$.}
However, when the underlying parameter $\theta \in \Theta$ is unknown, we define the best response (in a minmax sense) policy $\pi \in \Pi$ of \agenttwo{} as:
\begin{align}
\pi^*_{\Theta} = \argmin_{\pi \in \Pi} \max_{\theta \in \Theta} \Big(J_\theta(\pi^*_\theta) - J_\theta(\pi)\Big)
\label{maxmin_formulation_static}
\end{align}


Clearly, $J_\theta(\pi^*_\theta) - J_\theta(\pi^*_{\Theta}) \geq 0 \ \forall \theta \in \Theta$. In general, this gap can be arbitrarily large, as formally stated in the following theorem.


\begin{theorem}\label{thm_badperformance}
There exists a problem instance where the performance of \agenttwo~can be arbitrarily worse when \agentone's type $\thetatest$ is unknown. In other words, the gap $\max_{\theta \in \Theta} \Big(J_\theta(\pi^*_\theta) - J_\theta(\pi^*_\Theta)\Big)$ is arbitrarily high.
\end{theorem}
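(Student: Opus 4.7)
The claim is an existence statement, so my plan is to exhibit a single concrete parametric MDP family in which any stationary policy of \agenttwo{} is forced to trade utility between user types, and then drive the resulting loss arbitrarily large by scaling the rewards.

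The construction I have in mind is minimal: a single non-terminal state $s_0$ with two actions $a_L, a_R$, both self-looping at $s_0$, together with $\Theta = \{\theta_1, \theta_2\}$. Under $\theta_1$, set $R_{\theta_1}(s_0, a_L) = M$ and $R_{\theta_1}(s_0, a_R) = 0$, and under $\theta_2$ swap these two values. The transition dynamics can be made independent of $\theta$ by giving \agentone{} a trivial (single-action) action set, so that all user-type information lives in the reward function. Each type then has a clear best response: $\pi^*_{\theta_1}$ always plays $a_L$ and $\pi^*_{\theta_2}$ always plays $a_R$, giving $J_{\theta_i}(\pi^*_{\theta_i}) = M/(1-\gamma)$ for $i \in \{1, 2\}$.

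The key step is bounding the per-type shortfall of any stationary policy. For $\pi \in \Pi$ with $p := \pi(a_L \mid s_0)$, both actions self-loop, so $J_{\theta_1}(\pi) = pM/(1-\gamma)$ and $J_{\theta_2}(\pi) = (1-p)M/(1-\gamma)$. Therefore
\begin{equation*}
\max_{\theta \in \Theta} \bigl(J_\theta(\pi^*_\theta) - J_\theta(\pi)\bigr) \;=\; \frac{M}{1-\gamma}\,\max(1-p,\, p) \;\geq\; \frac{M}{2(1-\gamma)},
\end{equation*}
and this lower bound holds uniformly in $\pi \in \Pi$, so in particular for $\pi^*_\Theta$. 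Letting $M \to \infty$ (or, equivalently, holding $M$ fixed and letting $\gamma \to 1$) drives the gap to infinity, which is the conclusion of the theorem.

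The only real obstacle I foresee is notational: lining up the construction with the paper's definition of $T_\theta$ as being derived from an underlying two-agent MDP with \agentone{}'s policy $\pi^x_\theta$. This is handled cleanly by giving \agentone{} a degenerate action set so that $T_\theta = T^{x,y}$ is in fact $\theta$-independent, leaving all type-dependence in $R_\theta$. Everything else — the best responses, the explicit form of $J_\theta(\pi)$, and the uniform lower bound — is a direct computation once the instance is fixed.
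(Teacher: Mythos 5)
Your proof is correct, but it reaches the conclusion by a route that is essentially dual to the paper's. You place all of the type-dependence in the reward function ($R_{\theta_1}$ and $R_{\theta_2}$ favor opposite self-looping actions) while making the transition dynamics $\theta$-independent via a degenerate action set for \agentone; the paper does the opposite, keeping $R_\theta$ identical across types and encoding the type entirely in the transition kernel through \agentone's policy (match \agentone's action to stay in the rewarding $gold$ state, mismatch and fall into an absorbing zero-reward state). Both yield a clean uniform lower bound on the minmax regret --- yours $\frac{M}{2(1-\gamma)}$ for any stationary $\pi$, the paper's $\frac{\rmax}{1-\gamma} - \frac{\rmax}{1-\gamma/2}$ for the explicit minmax policy $p=1/2$ --- and both diverge as $\gamma \to 1$ or the reward scale grows, so the theorem as stated is proved either way. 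The one substantive difference worth noting: in the paper's instance the two types induce \emph{different} policies $\pi^x_{\theta_1} \neq \pi^x_{\theta_2}$ for \agentone, so the type is identifiable from observed actions and the adaptive framework developed in the rest of the paper can close the gap; in your instance \agentone's behavior carries no information about $\theta$, so the \textsc{Inference} procedure could never distinguish the two types. Your example is therefore a perfectly valid (and more minimal) witness for Theorem~\ref{thm_badperformance}, but it is a less apt motivating example for the paper's subsequent program, since it exhibits a setting where even adaptivity cannot help.
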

The proof is presented in the supplementary material. Theorem~\ref{thm_badperformance} shows that the performance of \agenttwo~can be arbitrarily bad when it doesn't know $\thetatest$ and is restricted to execute a fixed stationary Markov policy. In the next section, we present an algorithmic framework for designing robust policies for \agenttwo~for unknown $\thetatest$.  



\section{Designing Robust Policies}
In this section, we introduce our algorithmic framework for designing robust policies for the AI \agenttwo{}.
%

\begin{algorithm}[t!]
  \caption{Algorithmic framework for robust policies}\label{alg:algorithm.model}
  \begin{algorithmic}[1]
    \Statex {\bfseries Training phase}  
    \State \emph{Input:} parameter space $\Theta^{\textnormal{train}}$
    \State adaptive policy $\psi \leftarrow \textsc{Training}(\Theta^{\textnormal{train}})$
  \end{algorithmic}
    \begin{algorithmic}[1]
    \Statex {\bfseries Test phase} 
    \State \emph{Input:} adaptive policy $\psi$ 
    \State $O_0 \leftarrow ()$
    \For{$t=1, 2, \ldots$}
      \State Observe current state $s_t$ 
      \State Estimate \woagentone's type as $\theta_t \leftarrow \textsc{Inference}(O_{t-1})$ 
      \State Take action $a_t \leftarrow \psi(s_t, \theta_t)$
      \State Observe \woagentone's action $a^x_t$; $O_{t} \leftarrow O_{t-1} \oplus (s_t, a^x_t)$
    \EndFor
  \end{algorithmic}
\end{algorithm}

\subsection{Algorithmic framework}
Our approach relies on observing the behavior (i.e., actions taken) to make inferences about the \agentone's type $\theta$ and adapting \agenttwo{}'s policy accordingly to facilitate efficient cooperation. This is inspired by how people make decisions in uncertain situations (e.g., ability to safely drive a car even if the other driver on the road is driving aggressively). The key intuition is that at test time, the \agenttwo~can observe \agentone's~actions which are taken as $a^x \sim \pi^{x}_{\theta}(\cdot~|~s)$ when in state $s$ to infer $\theta$, and in turn use this additional information to make an improved decision on which actions to take.
More formally, we define the observation history available at the beginning of timestep $t$ as $O_{t-1} = (s_\tau, a^x_\tau)_{\tau = 1, \ldots, t - 1}$ and use it to infer the type of \agentone{} and act appropriately.


In particular, we will make use of an \textsc{Inference} procedure (details provided in Section~\ref{sec:inference}). Given $O_{t-1}$, this procedure returns an estimate of the type of \agentone{} at time $t$ given by $\theta_t \in \Theta$. Then, we consider stochastic policies of the form $\psi \colon S \times A \times \Theta \rightarrow \bss{0,1}$. The space of these policies is given by $\Psi = \{ \psi \mid \psi \colon S \times A \times \Theta \rightarrow \bss{0,1} \}$. For a fixed policy $\psi$ of \agenttwo{} and fixed, unknown $\theta$, we define its total expected reward in the MDP $\mathcal{M}(\theta)$ as follows:
%
\begin{align}
J_\theta(\psi) = \E \bracket{ \sum_{\tau = 1}^\infty \gamma^{\tau-1} R_{\theta}(s_{\tau}, a_{\tau}) ~|~ \mathcal{D}_0, T_{\theta}, \psi}.
\end{align}
Note that at any time $t$, we have $a_{t} \sim \psi(s_t, \theta_t)$ and $O_{t-1} = (s_\tau, a^x_\tau)_{\tau = 1, \ldots, t - 1}$ is generated according to $a^x_{\tau} \sim \pi^x_{\theta}(s_\tau)$.



We seek to find the policy for \agenttwo{} given by the following optimization problem:
\begin{align}
\min_{\psi \in \Psi} \max_{\theta \in \Theta} \Big(J_\theta(\pi^*_\theta) - J_\theta(\psi)\Big)
\label{maxmin_formulation}
\end{align}

In the next two sections, we will design algorithms to optimize the objective in Equation~\eqref{maxmin_formulation} following the framework outlined in Algorithm~\ref{alg:algorithm.model}. In particular, we will discuss two possible architectures for policy $\psi$ and corresponding \textsc{Training} procedures in Section~\ref{sec:training}. Then, in Section~\ref{sec:inference}, we describe ways to implement the \textsc{Inference} procedure for inferring \agentone's type using observed actions. Below, we provide theoretical insights into the robustness of the proposed algorithmic framework.
 \subsection{Performance analysis}
 \label{sec:performance-analysis}
We begin by specifying three technical questions that are important to gain theoretical insights into the robustness of the proposed framework, see below:
\begin{enumerate}[leftmargin=0.7cm,label=Q.\arabic*,start=1]
    \item Independent of the specific procedures used for \textsc{Training} and \textsc{Inference}, the first question to tackle is the following: When  \agentone{}'s~true type is $\theta^{\textnormal{test}}$ and \agenttwo~uses a best response policy for $\pi^*_{\hat{\theta}}$ such that $||\theta^{\textnormal{test}} - \hat{\theta}|| \leq \epsilon$,  what are the performance guarantees on the total utility achieved by \agenttwo? (see Theorem~\ref{theorem:approximateMDP}).
    \item Regarding \textsc{Training} procedure: When  \agentone{}'s~type is $\theta^{\textnormal{test}}$ and the inference procedure outputs $\hat{\theta}$ such that $||\theta^{\textnormal{test}} - \hat{\theta}|| \leq \epsilon$, what is the performance of policy $\psi$? (see Section~\ref{sec:training}).    
    \item Regarding \textsc{Inference} procedure: When \agentone{}'s~type is $\theta^{\textnormal{test}}$, can we infer $\hat{\theta}$ such that either $||\theta^{\textnormal{test}} - \hat{\theta}||$ is small, or \agentone's policies $\pi^x_{\hat{\theta}}$ and $\pi^x_{\theta^{\textnormal{test}}}$ are approximately equivalent? (see Section~\ref{sec:inference})
\end{enumerate}

\subsubsection{Smoothness properties}
For addressing Q.1, we introduce a number of properties characterizing our problem setting. These properties are essentially smoothness conditions on MDPs that enable us to make statements about the following intermediate issue: For two types $\theta, \theta'$, how ``similar" are the corresponding MDPs $\mathcal{M}(\theta), \mathcal{M}(\theta')$ from \agenttwo's point of view?


The first property characterizes the smoothness of rewards for \agenttwo~w.r.t.\ parameter $\theta$. Formally, the parametric MDP $\mathcal{M}(\theta)$ is $\alpha$-smooth with respect to the rewards if for any $\theta$ and $\theta'$ we have 
\begin{align}
  \max_{s \in S, a \in A} |R_\theta(s, a) - R_{\theta'}(s, a)| \leq \alpha \cdot \rmax \cdot ||\theta - \theta'||_2  
\end{align}

The second property characterizes the smoothness of policies for \agentone~w.r.t.\ parameter $\theta$; this in turn implies that the MDP's transition dynamics as perceived by \agenttwo~are smooth. Formally, the parametric MDP $\mathcal{M}(\theta)$ is $\beta$-smooth in the behavior of \agentone{} if for any $\theta$ and $\theta'$ we have 
\begin{align}
\max_{s \in S} \textrm{KL}{\big(\pi^x_{\theta}(.~|~s); \pi^x_{\theta'}(.~|~s) \big)} \leq \beta \cdot ||\theta - \theta'||_2.
\end{align}
For instance, one setting where this property holds naturally is when $\pi^x_{\theta}$ is a soft Bellman policy computed w.r.t.\ a reward function for \agentone{} which is smooth in $\theta$ \cite{ziebart10,kamalaruban2019interactive}.

    
The third property is a notion of influence as introduced by \cite{dimitrakakis2017multi}: This notion captures how much one agent can affect the probability distribution of the next state with her actions as perceived by the second agent. Formally, we capture the influence of \agentone~on \agenttwo~as follows:
\begin{align}
	\mathcal{I}_{x}\! := \! \max_{s \in S} \big(\max_{a, b,b'} \norm{T^{x,y}(.~|~s,a,b) - T^{x,y}(.~|~s,a,b')}_1\big), \label{eq.influence}
\end{align}
where $a$ represents the action of \agenttwo~, $b,b'$ represents two distinct actions of \agentone, and $T^{x,y}$ is the transition dynamics of the two-agent MDP (see Section~\ref{sec.setup.model}). Note that $\mathcal{I}_{x} \in [0, 1]$ and allows us to do fine-grained performance analysis: for instance, when $\mathcal{I}_{x} = 0$, then \agentone~doesn't affect the  transition dynamics as perceived by \agenttwo~and we can expect to have better performance for \agenttwo.



\subsubsection{Guarantees}
Putting this together, we can provide the following guarantees as an answer for Q.1:
\begin{theorem}
\label{theorem:approximateMDP}
Let $\theta^{\textnormal{test}} \in \Theta$ be the type of \agentone~at test time and \agenttwo~uses a policy $\pi^*_{\hat{\theta}}$ such that $||\theta^{\textnormal{test}} - \hat{\theta}|| \leq \epsilon$. The parameters $(\alpha, \beta ,\mathcal{I}_{x})$ characterize the smoothness as defined above. Then, the total reward achieved by \agenttwo~ satisfies the following guarantee
\begin{align*}
    J_{\theta^{\textnormal{test}}}(\pi^*_{\hat{\theta}}) \geq J_{\theta^{\textnormal{test}}}(\pi^*_{\theta^{\textnormal{test}}}) - \frac{\epsilon \cdot \alpha \cdot r_{\textnormal{max}}}{1-\gamma}
    - \frac{\mathcal{I}_{x} \cdot \sqrt{2 \cdot \beta \cdot \epsilon} \cdot r_{\textnormal{max}}}{(1-\gamma)^{2}}
\end{align*}
\end{theorem}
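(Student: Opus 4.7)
The plan is to combine a standard simulation-lemma bound across the two parametric MDPs $\mathcal{M}(\thetatest)$ and $\mathcal{M}(\hat\theta)$ with an optimality-based decomposition that exploits $\pi^*_{\hat\theta}$ being optimal in $\mathcal{M}(\hat\theta)$. First I would insert $J_{\hat\theta}(\pi^*_{\thetatest})$ and $J_{\hat\theta}(\pi^*_{\hat\theta})$ as pivots and drop the resulting non-positive middle term via $J_{\hat\theta}(\pi^*_{\hat\theta}) \geq J_{\hat\theta}(\pi^*_{\thetatest})$, leaving
\begin{align*}
J_{\thetatest}(\pi^*_{\thetatest}) - J_{\thetatest}(\pi^*_{\hat\theta})
\leq \bigl[J_{\thetatest}(\pi^*_{\thetatest}) - J_{\hat\theta}(\pi^*_{\thetatest})\bigr]
+ \bigl[J_{\hat\theta}(\pi^*_{\hat\theta}) - J_{\thetatest}(\pi^*_{\hat\theta})\bigr].
\end{align*}
Each summand now compares a \emph{single} policy across two parametric MDPs and is therefore amenable to a simulation-lemma bound of the form
\begin{align*}
\bigl|J_\theta(\pi) - J_{\theta'}(\pi)\bigr|
\leq \frac{\|R_\theta - R_{\theta'}\|_\infty}{1-\gamma}
+ \frac{\gamma\, r_{\max}}{(1-\gamma)^2}\,\max_{s,a}\bigl\|T_\theta(\cdot|s,a) - T_{\theta'}(\cdot|s,a)\bigr\|_1.
\end{align*}

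The reward gap is controlled immediately by $\alpha$-smoothness: $\|R_\thetatest - R_{\hat\theta}\|_\infty \leq \alpha\, r_{\max}\,\epsilon$. The transition gap is the delicate piece, because from \woagenttwo's perspective both kernels are expectations $T_\theta(s'|s,a) = \mathbb{E}_{a^x \sim \pi^x_\theta(\cdot|s)}[T^{x,y}(s'|s,a,a^x)]$ that couple the unknown behavioral policy with the underlying two-agent dynamics, so a naive bound would lose the influence factor entirely.

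To handle it, I would exploit the identity $\sum_{a^x}[\pi^x_\theta(a^x|s) - \pi^x_{\theta'}(a^x|s)] = 0$ by subtracting an arbitrary reference action $a^x_0$ inside the sum, rewriting the transition difference as $\sum_{a^x}[\pi^x_\theta(a^x|s) - \pi^x_{\theta'}(a^x|s)]\bigl[T^{x,y}(s'|s,a,a^x) - T^{x,y}(s'|s,a,a^x_0)\bigr]$. Taking $\|\cdot\|_1$ over $s'$, the inner $\ell_1$-norm is bounded by $\mathcal{I}_x$ via definition~\eqref{eq.influence}, giving $\|T_\thetatest(\cdot|s,a) - T_{\hat\theta}(\cdot|s,a)\|_1 \leq \mathcal{I}_x \cdot \|\pi^x_\thetatest(\cdot|s) - \pi^x_{\hat\theta}(\cdot|s)\|_1$. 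Pinsker's inequality then converts $\ell_1$ to KL via $\|p - q\|_1 \leq \sqrt{2\,\textrm{KL}(p;q)}$, and $\beta$-smoothness yields $\|\pi^x_\thetatest(\cdot|s) - \pi^x_{\hat\theta}(\cdot|s)\|_1 \leq \sqrt{2\beta\epsilon}$ uniformly in $s$.

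Substituting both bounds into the simulation-lemma inequality and summing the two pieces of the decomposition, together with $\gamma \leq 1$, yields exactly the claimed inequality up to routine constant tracking. The main obstacle is the transition-difference step: triangle inequality alone would produce a bound scaling with $\sum_{a^x}(\pi^x_\theta + \pi^x_{\theta'})$ and miss $\mathcal{I}_x$ entirely; it is the zero-sum trick that unlocks the influence factor and makes the final guarantee meaningful in regimes where \agentone's actions exert limited leverage on \agenttwo's perceived next-state distribution.
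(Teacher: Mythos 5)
Your proposal is correct and follows essentially the same route as the paper: a simulation-lemma bound on the value of a fixed policy across the two approximately-equivalent MDPs $\mathcal{M}(\thetatest)$ and $\mathcal{M}(\hat\theta)$, with the reward gap controlled by $\alpha$-smoothness and the transition gap controlled by $\mathcal{I}_{x}$ times the $\ell_1$-distance of \agentone's policies, converted to $\sqrt{2\beta\epsilon}$ via Pinsker --- and your zero-sum reference-action trick is precisely the argument the paper omits when it states the influence lemma with only a citation. The one caveat is that your (necessary) two-term pivot through $\mathcal{M}(\hat\theta)$ invokes the simulation lemma twice and therefore yields the claimed bound with an extra factor of $2$, a constant that the paper's own terse ``putting it together'' step also silently drops.
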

The proof of the theorem is provided in the supplementary material and builds up on the theory of approximate equivalence of MDPs by \cite{ApproximateEquival}. In the next two sections, we provide specific instantiations of \textsc{Training} and \textsc{Inference} procedures.

\begin{figure*}[t!]
\centering
\begin{subfigure}[b]{0.42\linewidth}
   \centering
   \includegraphics[width=0.90\textwidth]{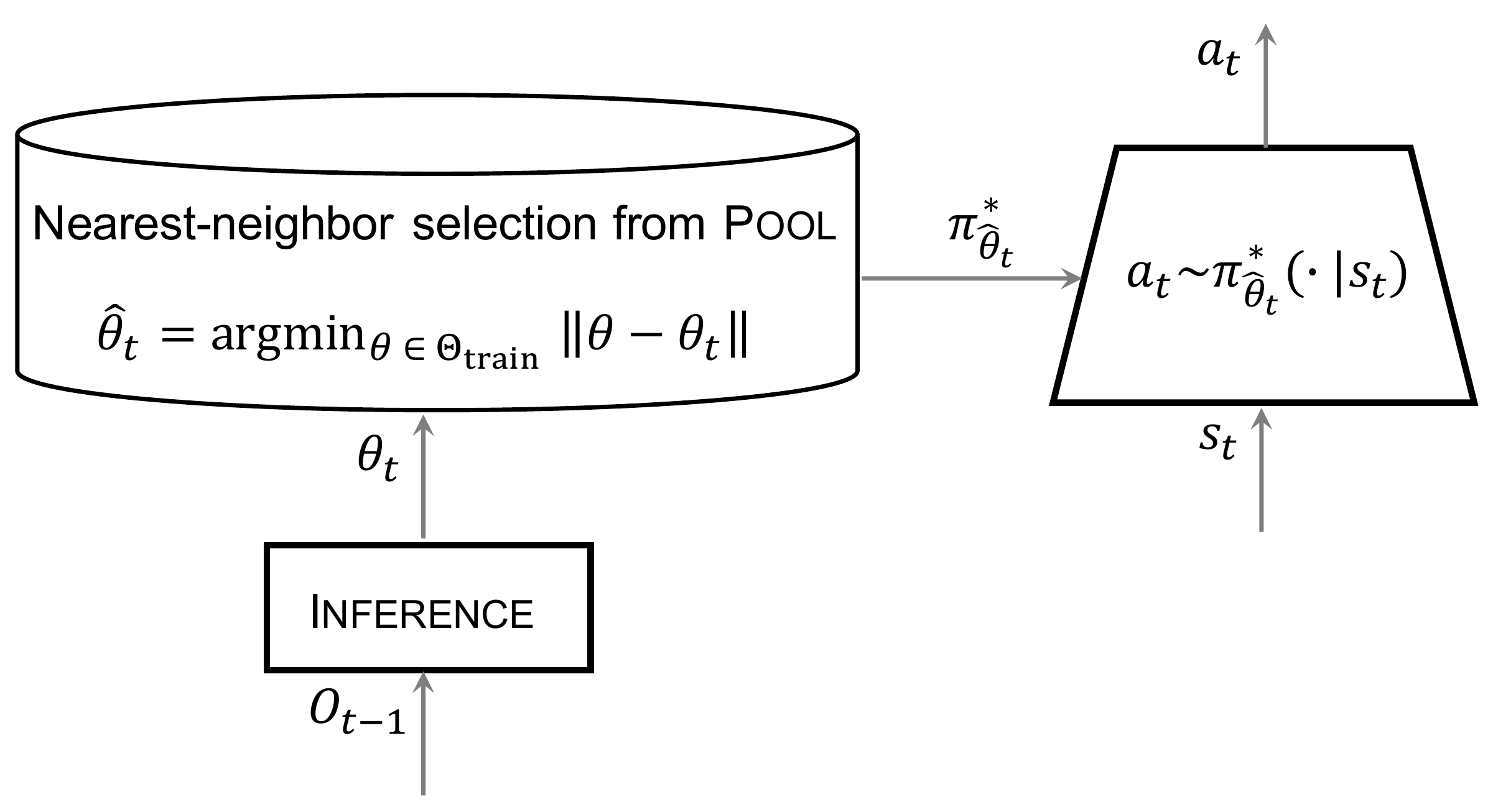}
   \caption{Test phase in Algorithm~\ref{alg:algorithm.model} with policy $\psi$ trained using \algcontroller procedure.}
   \label{fig:algos.controller} 
\end{subfigure}
\qquad
\qquad
\qquad
\begin{subfigure}[b]{0.42\linewidth}
   \centering
   \includegraphics[width=0.90\textwidth]{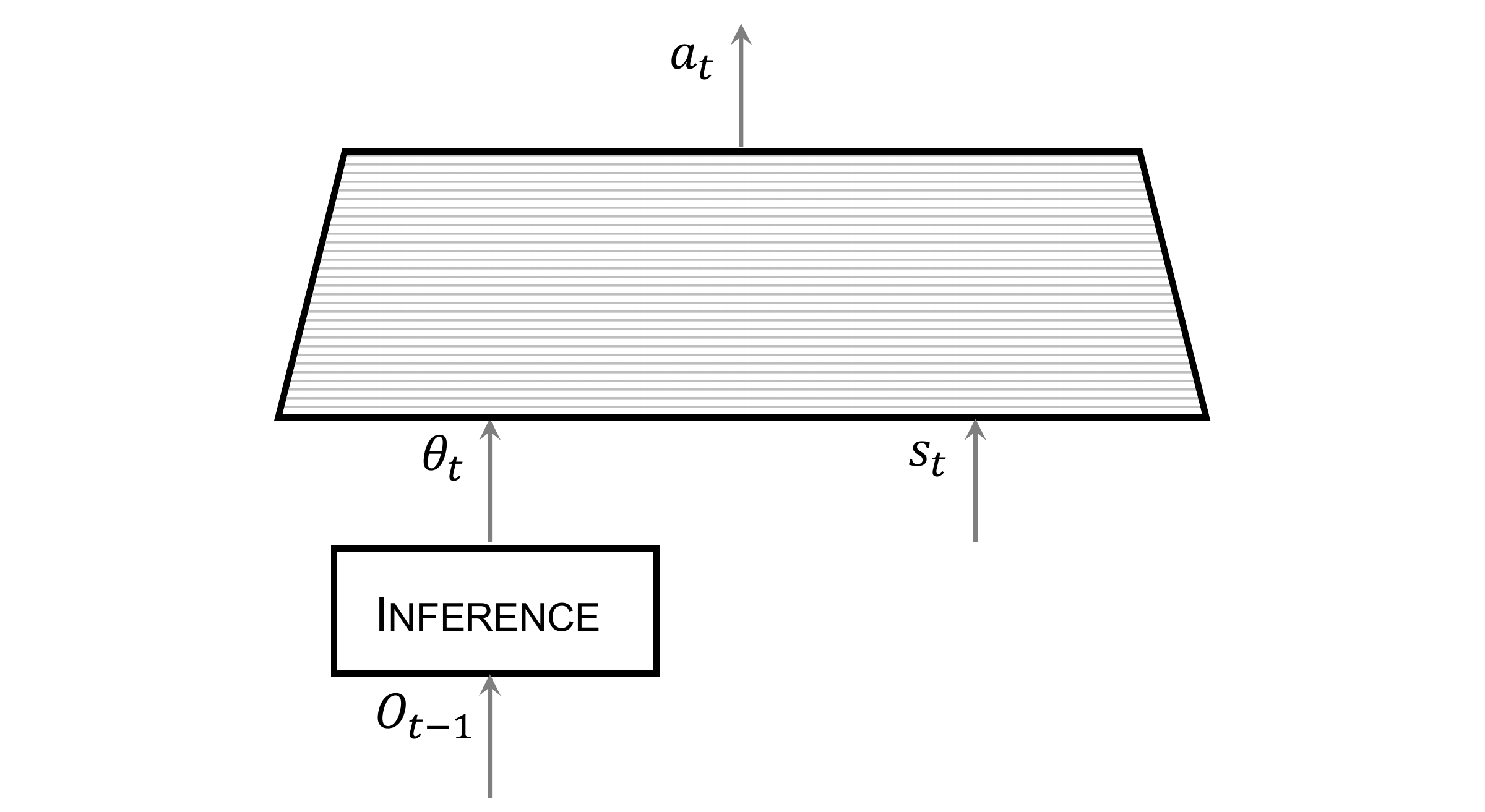}
   \caption{Test phase in Algorithm~\ref{alg:algorithm.model} with policy $\psi$ trained using \algdqn procedure.}
   \label{fig:algos.dqn}
\end{subfigure}
\caption{Two different instantiations of Algorithm~\ref{alg:algorithm.model} with the adaptive policy $\psi$ trained using procedures \algcontroller and \algdqn. \textbf{(a)} \algcontroller trains a set of best response policies $\{\pi^*_\theta ~|~ \theta \in \Theta_{\textnormal{train}} \}$. In the test phase at time step $t$ with $\theta_t$ as the output of \textsc{Inference}, the action $a_t$ is sampled from a distribution $\pi^*_{\hat{\theta}_{t}}(\cdot ~|~ s_t)$ where $\hat{\theta}_{t}$ is the nearest match for $\theta_t$ in the set $\Theta_{\textnormal{train}}$. \textbf{(b)} \algdqn trains one deep Q-Network (DQN) with an augmented state space given by $(s, \theta)$. At time $t$,  with $\theta_t$ as the output of \textsc{Inference}, the DQN network is given as  input a tuple $(s_t, \theta_t)$ and the network outputs an action $a_t$.} 
\label{fig:algos}
\end{figure*}


\section{\textsc{Training} Procedures}
\label{sec:training}
In this section, we present two procedures to train adaptive policies $\psi$ (see \textsc{Training} in Algorithm~\ref{alg:algorithm.model}).

\subsection{\textsc{Training} procedure \algcontroller}\label{sec.alg.controller}
The basic idea of \algcontroller is to maintain a pool $\pool$ of best response policies for \woagenttwo{} and, in the test phase, switch between these policies based on inference of the type $\thetatest$.

\subsubsection{Architecture of the policy $\psi$}
The adaptive pool based policy $\psi$ (\algcontroller) consists of a pool (\pool{}) of best response policies corresponding to different possible \agentone's{} types $\theta$, and a nearest-neighbor policy selection mechanism. 
In particular, when invoking \algcontroller{} for state $s_t$ and inferred \agentone's type $\theta_t$, the policy $\psi(s_t,\theta_t)$ first identifies the most similar \agentone{} in \pool, i.e., $\hat{\theta}_{t} = \arg \min_{\theta \in \Theta^{\textnormal{train}}} \| \theta - \theta_t \|$, and then executes an action $a_t \sim \pi^*_{\hat{\theta}_{t}}(\cdot ~|~ s_t)$ using the best response policy $\pi^*_{\hat{\theta}_{t}}$.


\subsubsection{Training process}
During training we compute a pool of best response policies \pool{} for a set of possible \agentone's types $\Theta^{\textnormal{train}}$, see Algorithm~\ref{alg:algorithm1.train}.

\begin{algorithm}[h!]
  \caption{\algcontroller: Training process}\label{alg:algorithm1.train}
  \begin{algorithmic}[1]
    \State \emph{Input:} Parameter space $\Theta^{\textnormal{train}}$
    \State $\pool \leftarrow \{\}$ 
    \For{each $\theta^{\textnormal{iter}} \in \Theta^{\textnormal{train}}$}
        \State $\pi^*_{\theta^{\textnormal{iter}}} \leftarrow$ best response policy for MDP $\mathcal{M}({\theta^{\textnormal{iter}}})$
        \State $\pool \leftarrow \pool \cup \{ (\theta^{\textnormal{iter}}, \pi^*_{\theta^{\textnormal{iter}}} )\}$
    \EndFor
    \State \Return $\pool$
  \end{algorithmic}
\end{algorithm}

\subsubsection{Guarantees}
It turns out that if the set of possible \agentone's types $\Theta^{\textnormal{train}}$ is chosen appropriately, Algorithm~\ref{alg:algorithm.model} instantiated with \algcontroller{} enjoys strong performance guarantees. 
In particular, choosing $\Theta^{\textnormal{train}}$ as a  sufficiently fine $\epsilon'$-cover of the parameter space $\Theta$, ensures that for any $\thetatest  \in \Theta$, that we might encounter at test time, we have considered a sufficiently \emph{similar} \agentone{} during training and hence can execute a best response policy which achieves good performance, see corollary below. 

\begin{corollary}
\label{corollary:pool}
Let $\Theta^{\textnormal{train}}$ be an $\epsilon'$-cover for $\Theta$, i.e., for all $\theta \in \Theta, \exists \theta' \in \Theta^{\textnormal{train}} \textnormal{ s.t. } ||\theta - \theta'|| \leq \epsilon'$. Let $\theta^{\textnormal{test}} \in \Theta$ be the type of \agentone~and the \textsc{Inference} procedure outputs $\theta_t$ such that $||\theta_t - \theta^{\textnormal{test}}|| \leq \epsilon''$. Let $\epsilon := \epsilon' + \epsilon''$. Then, at time $t$, the policy $\pi^*_{\hat{\theta}_{t}}$ used by \agenttwo~has the following guarantees:
\begin{align*}
    J_{\theta^{\textnormal{test}}}(\pi^*_{\hat{\theta}_t}) \geq J_{\theta^{\textnormal{test}}}(\pi^*_{\theta^{\textnormal{test}}}) - \frac{\epsilon \cdot \alpha \cdot r_{\textnormal{max}}}{1-\gamma}
    - \frac{\mathcal{I}_{x} \cdot \sqrt{2 \cdot \beta \cdot \epsilon} \cdot r_{\textnormal{max}}}{(1-\gamma)^{2}}
\end{align*}
\end{corollary}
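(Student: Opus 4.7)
The plan is to reduce the corollary to a direct application of Theorem~\ref{theorem:approximateMDP} by showing that the effective type $\hat{\theta}_t$ actually used by \algcontroller{} satisfies $\|\hat{\theta}_t - \theta^{\textnormal{test}}\| \leq \epsilon = \epsilon' + \epsilon''$. Once this is established, the reward inequality is obtained by instantiating Theorem~\ref{theorem:approximateMDP} with $\hat{\theta} \leftarrow \hat{\theta}_t$ and the same parameters $(\alpha, \beta, \mathcal{I}_x)$, yielding exactly the claimed bound.

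\textbf{Step 1 (cover bound on $\|\hat{\theta}_t - \theta_t\|$).} By the description of \algcontroller{} in Section~\ref{sec.alg.controller}, we have $\hat{\theta}_t = \arg\min_{\theta \in \Theta^{\textnormal{train}}} \|\theta - \theta_t\|$. Since $\theta_t$ is an estimate of an element of $\Theta$ and $\Theta^{\textnormal{train}}$ is an $\epsilon'$-cover of $\Theta$, there exists $\theta' \in \Theta^{\textnormal{train}}$ with $\|\theta' - \theta_t\| \leq \epsilon'$, and the nearest-neighbor minimality immediately gives $\|\hat{\theta}_t - \theta_t\| \leq \|\theta' - \theta_t\| \leq \epsilon'$.

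\textbf{Step 2 (triangle inequality).} Combining Step~1 with the hypothesis $\|\theta_t - \theta^{\textnormal{test}}\| \leq \epsilon''$ on the \textsc{Inference} procedure, the triangle inequality yields
\begin{align*}
\|\hat{\theta}_t - \theta^{\textnormal{test}}\| \;\leq\; \|\hat{\theta}_t - \theta_t\| + \|\theta_t - \theta^{\textnormal{test}}\| \;\leq\; \epsilon' + \epsilon'' \;=\; \epsilon.
\end{align*}

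\textbf{Step 3 (invoke Theorem~\ref{theorem:approximateMDP}).} Apply Theorem~\ref{theorem:approximateMDP} with $\hat{\theta} := \hat{\theta}_t$ and the bound from Step~2. Since \agenttwo{} executes $\pi^*_{\hat{\theta}_t}$ at state $s_t$, the theorem directly gives the desired lower bound on $J_{\theta^{\textnormal{test}}}(\pi^*_{\hat{\theta}_t})$.

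\textbf{Main obstacle.} There is no genuine technical obstacle here; the corollary is essentially a composition of a nearest-neighbor-in-a-cover argument with Theorem~\ref{theorem:approximateMDP}. The only subtle point is that Step~1 tacitly uses the cover property of $\Theta^{\textnormal{train}}$ at $\theta_t$ rather than at $\theta^{\textnormal{test}}$; this is fine provided the inferred type $\theta_t$ is drawn from (or identified with an element of) $\Theta$. If one instead only wants to use the cover at $\theta^{\textnormal{test}}$, picking $\theta' \in \Theta^{\textnormal{train}}$ with $\|\theta' - \theta^{\textnormal{test}}\| \leq \epsilon'$ and applying the triangle inequality twice gives the slightly weaker bound $\|\hat{\theta}_t - \theta^{\textnormal{test}}\| \leq \epsilon' + 2\epsilon''$; a one-line remark fixing the convention suffices to recover the stated constant.
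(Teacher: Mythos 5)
Your proposal is correct and follows essentially the same route as the paper, which simply states that the corollary ``follows from Theorem~\ref{theorem:approximateMDP} given that the pool is sufficiently rich''; your triangle-inequality composition of the cover radius $\epsilon'$ with the inference error $\epsilon''$ is exactly the intended argument. Your remark about whether the cover property is invoked at $\theta_t$ or at $\theta^{\textnormal{test}}$ (giving $\epsilon' + \epsilon''$ versus $\epsilon' + 2\epsilon''$) is a genuine subtlety the paper glosses over, and your resolution is the right one.
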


Corollary~\ref{corollary:pool} follows from the result of Theorem~\ref{theorem:approximateMDP} given that the pool \pool{} of policies trained by \algcontroller is sufficiently rich. Note that the accuracy $\epsilon''$ of \textsc{Inference} would typically improve over time and hence the performance of the algorithm is expected to improve over time in practice, see Section~\ref{sec.expresults}.
%
Building on the idea of \algcontroller, next we provide a more practical implementation of training procedure which does not require to maintain an explicit pool of best response policies and therefore is easier to scale to larger problems.

\subsection{\textsc{Training} procedure \algdqn}\label{sec.alg.dqn}
\algdqn builds on the ideas of \algcontroller: Here, instead of explicitly maintaining a pool of best response policies for \agenttwo, we have a policy network trained on an augmented state space $S \times \Theta$.
This policy network resembles Deep Q-Network (DQN) architecture \cite{minh15}, but operates on an augmented state space and takes as input a tuple $(s, \theta)$. Similar architecture was used by \cite{hessel19multitask}, where one policy network was trained to play 57 Atari games, and the state space was augmented with the index of the game. In the test phase, \agenttwo~selects actions given by this policy network.

\subsubsection{Architecture of the policy $\psi$}
The adaptive policy $\psi$ (\algdqn) consists of a neural network trained on an augmented state space $S \times \Theta$. In particular, when invoking \algdqn for state $s_t$ and inferred \agentone's type $\theta_t$, we use the augmented state space $(s_t, \theta_t)$ as input to the neural network.  The output layer of the network computes the Q-values of all possible actions corresponding to the augmented input state. Agent $\mathcal{A}^{y}$ selects the action with the maximum Q-value.

\subsubsection{Training process} Here, we provide a description of how we train the policy network using augmented state space, see Algorithm~\ref{alg:algorithm2.train}. During one iteration of training the policy network, we first sample a parameter $\theta^\textnormal{iter} \sim \Theta^\textnormal{train}$. We then obtain the optimal best response policy $\pi^{*}_{\theta^\textnormal{iter}}$ of \agenttwo~for the MDP $\mathcal{M}(\theta^\textnormal{iter})$. We compute the \textit{vector} of all Q-values corresponding to this policy, i.e, $Q(s, a) \ \forall s \in S, a \in A$ (represented by $Q^{\pi^{*}_{\theta^\textnormal{iter}}}$ in Algorithm~\ref{alg:algorithm2.train}), using the standard Bellman equations \cite{suttonbarto1998}.
In our setting, we use these pre-computed Q-values to serve as the target values for the associated parameter $\theta^\textnormal{iter}$ for training the policy network. The loss function used for training is the standard squared error loss between the target Q-values computed using the procedure described above and those given by the network under training. The gradient of this loss function is used for back-propagation through the network. Multiple such iterations are carried out during training, until a convergence criteria is met. For more details on Deep Q-Networks, we refer the reader to see \cite{minh15}.

\begin{algorithm}[h!]
  \caption{\algdqn: Training process}\label{alg:algorithm2.train}
  \begin{algorithmic}[1]
    \State \emph{Input:} Parameter space $\Theta^{\textnormal{train}}$
    \State $\psi \leftarrow$ Init. policy network on augmented state space
    \While{\emph{convergence criteria is met}}
        \State sample $\theta^{\textnormal{iter}} \sim \textnormal{Uniform}(\Theta^{\textnormal{train}})$
        \State $\pi^*_{\theta^{\textnormal{iter}}} \leftarrow$ best response policy for MDP $\mathcal{M}({\theta^{\textnormal{iter}}})$
        \State $Q^{\pi^{*}_{\theta^{\textnormal{iter}}}} \leftarrow$ Q-values for policy $\pi^*_{\theta^{\textnormal{iter}}}$ in MDP $\mathcal{M}({\theta^{\textnormal{iter}}})$
         \State Train $\psi$ for one episode:
        \Statex \hspace*{3em} (i) by augmenting the state space with ${\theta^{\textnormal{iter}}}$
        \Statex \hspace*{3em} (ii) by using target Q-values $Q^{\pi^{*}_{\theta^{\textnormal{iter}}}}$
    \EndWhile
    \State \Return $\psi$
  \end{algorithmic}
\end{algorithm}
\section{Inference Procedure}
\label{sec:inference}
In the test phase, the inference of \agentone{}'s type $\theta^\textnormal{test}$ from an observation history $O_{t-1}$ is a key component of our framework, and crucial for facilitating efficient collaboration.
Concretely, Theorem~\ref{theorem:approximateMDP} implies that a best response policy $\pi^*_{\hat{\theta}}$ also achieves good performance for \agentone{} with true parameter $\thetatest$ if $||\hat{\theta} - \thetatest||$ is small and MDP $\mdpM(\theta)$ is smooth w.r.t. parameter $\theta$ as described in Section~\ref{sec:performance-analysis}.



There are several different approaches that one can consider for inference, depending on application setting. For instance, we can use probabilistic approaches as proposed in the work of \cite{DBLP:conf/aaaiss/EverettR18} where a pool of \agentone's policies $\pi^x_\theta \ \forall \ \theta \in \Theta$ is maintained and inference is done at run time via simple probabilistic methods. Based on the work by \cite{grover18learning}, we can also maintain a more compact representation of \agentone's policies and then apply probabilistic methods on this representation.

We can also do inference based on ideas of inverse  reinforcement learning (IRL) where observation history $O_{t-1}$ serves the purpose of demonstrations \cite{abbeel2004apprenticeship,ziebart10}. This is particularly suitable when the parameter $\theta$ exactly corresponds to the rewards used by \agentone{} when computing its policy $\pi^x_\theta$. In fact, this is the approach that we follow for our inference module, and in particular, we employ the popular IRL algorithm, namely Maximum
Causal Entropy (MCE) IRL algorithm \cite{ziebart10}. We refer the reader to Section~\ref{sec.experiments.setup} for more details.

\begin{figure*}[t!]
\centering
\begin{subfigure}[b]{0.22\linewidth}
   \centering
   \includegraphics[width=0.9\textwidth]{./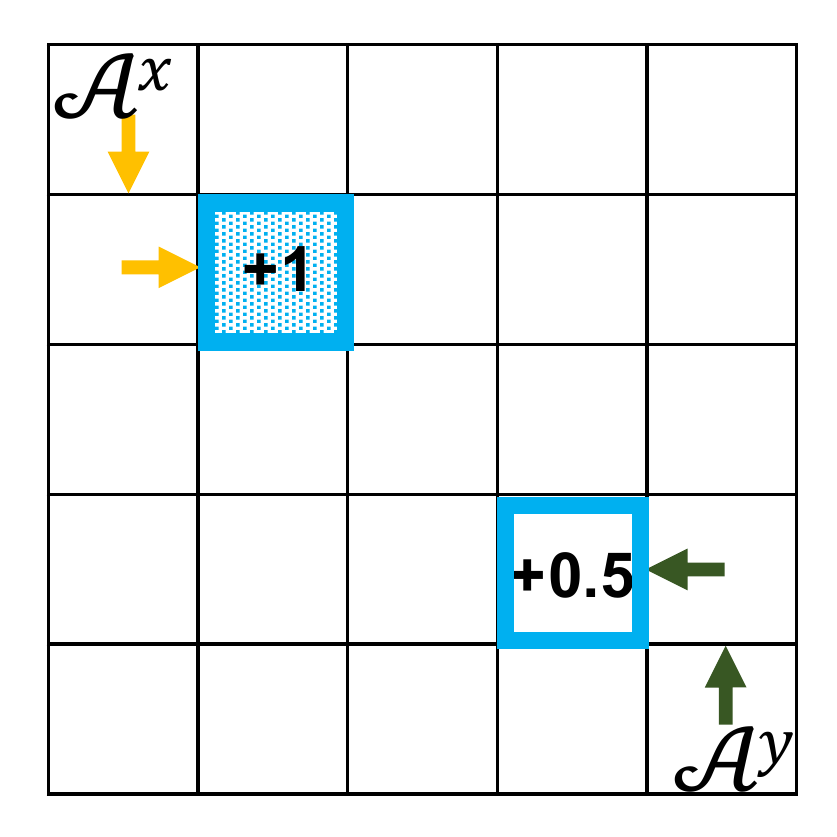}
   \caption{$\theta = [+1,+0.5]$}
   \label{fig:Gathering Game:1} 
\end{subfigure}
\quad
\begin{subfigure}[b]{0.22\linewidth}
   \centering
   \includegraphics[width=0.9\textwidth]{./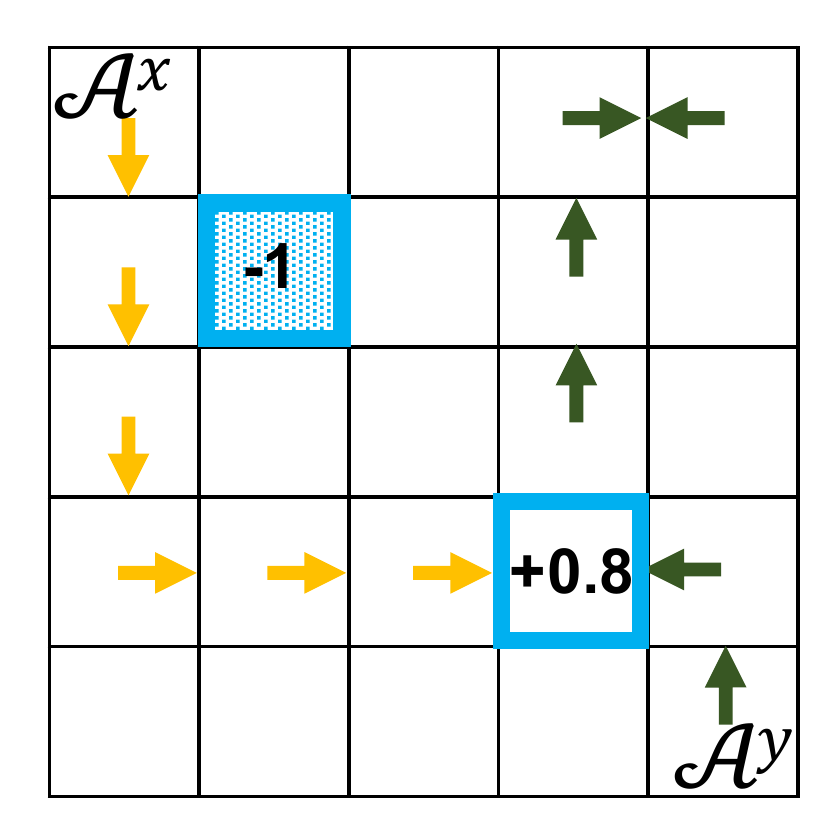}
   \caption{$\theta = [-1,+0.8]$}
   \label{fig:Gathering Game:2}
\end{subfigure}
\quad
\begin{subfigure}[b]{0.22\linewidth}
   \centering
	\includegraphics[width=0.9\textwidth]{./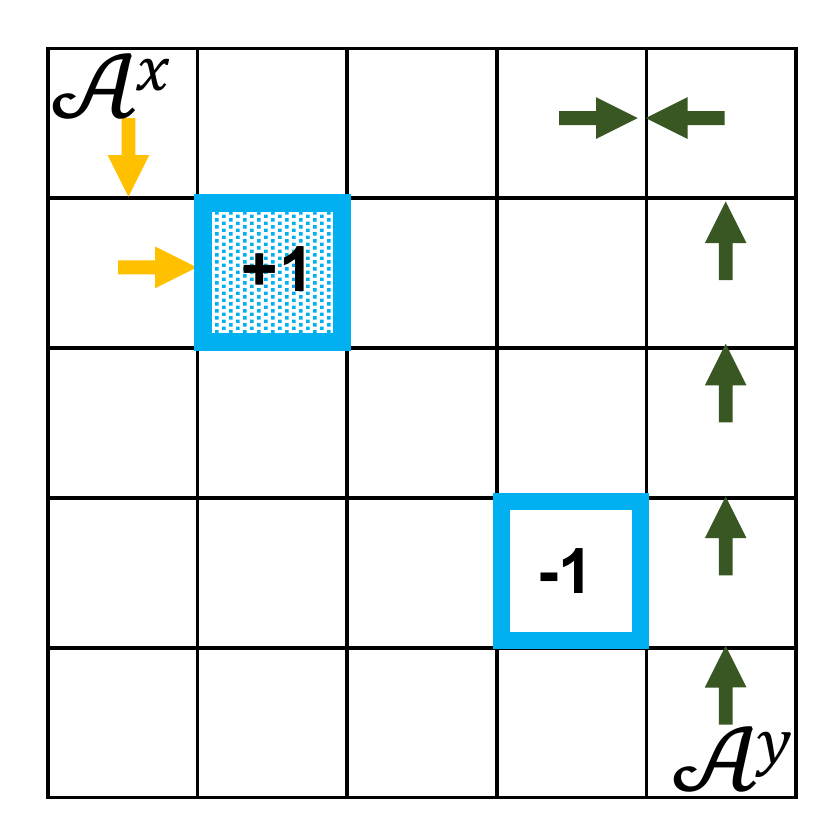}
	\caption{$\theta = [+1,-1]$}
	\label{fig:Gathering Game:3}
\end{subfigure}
\quad
\begin{subfigure}[b]{0.22\linewidth}
   \centering
	\includegraphics[width=0.9\textwidth]{./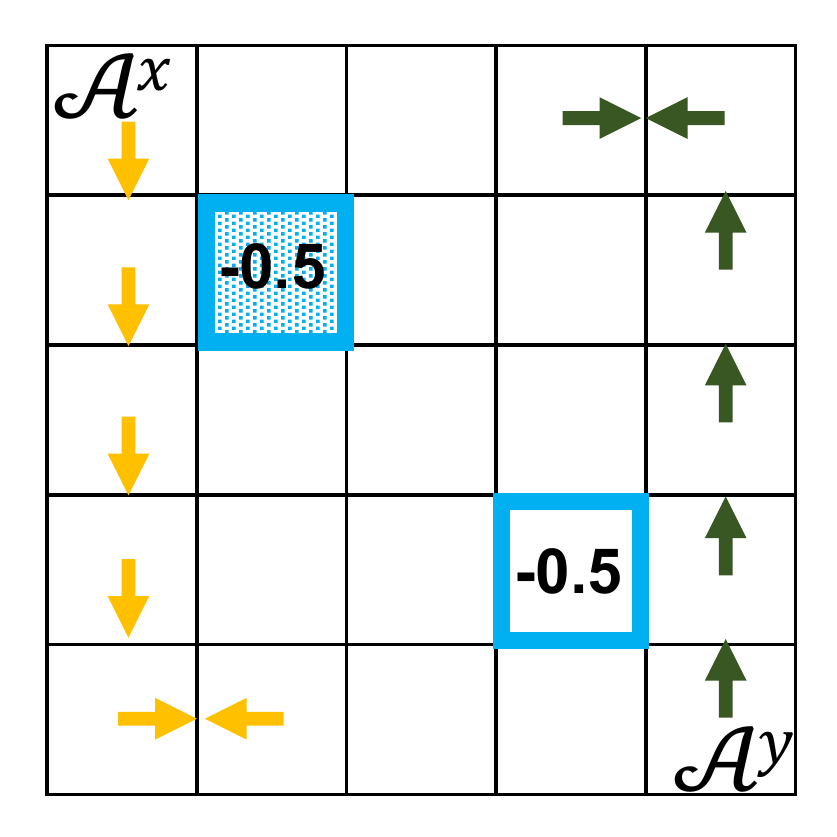}
	\caption{$\theta = [-0.5,-0.5]$}
	\label{fig:Gathering Game:4}
\end{subfigure}
\caption{We evaluate the performance on a gathering game environment, a variant of the environments considered by \cite{leibo17} and \cite{raileanu18modeling}. The objective is to maximize the total reward by collecting fruits while avoiding collisions and \agentone~is assisted by \agenttwo~in achieving this objective. The \textbf{environment} has a 5x5 grid space resulting in 25 grid cells and the \textbf{state space} is determined by the joint location of \agentone~and \agenttwo~(i.e., $|S| = 25 \times 25$). \textbf{Actions} are given by $A = $\textit{\{`step up', `step left', `step down', `step right', `stay'\}}. Each action is executed successfully with $0.8$ probability; with \textit{random move probability} of $0.2$, the agent is randomly placed in one of the four neighboring cells located in vertical or horizontal positions.
%
Two types of fruit objects are placed in two fixed grid cells (shown by `shaded blue' and `blue' cells). The rewards associated with these two fruit types are given by the parameter vector $\theta \in \Theta$ where $\Theta := [-1, +1]^2$. In our environment, the location of these two fruit types is fixed and fruits do not disappear (i.e., there is an unlimited supply of each fruit type in their respective locations).
For any fixed $\theta$, \textbf{\agentone's policy} $\pi_{\theta}^{x}$ is computed first by ignoring the presence of \agenttwo. 
From \agenttwo's point of view, each $\theta$ gives rise to a \textbf{parametric MDP} $\mdpM(\theta)$. \textbf{Transition dynamics} $T_{\theta}$ in $\mdpM(\theta)$ are obtained by marginalizing out the effect of \agentone's policy $\pi_{\theta}^{x}$. \textbf{Reward function} $R_{\theta}$ in $\mdpM(\theta)$ corresponds to the reward associated with fruits which depends on $\theta$; in addition to collecting fruits, \agenttwo~should avoid collision or close proximity to \agentone. This is enforced by a collision cost of $-5$ when \agenttwo~is in the same cell as \agentone, and a proximity cost of $-2$ when \agenttwo~is in one of the four neighboring cells located in vertical or horizontal positions.
The discount factor $\gamma$ is set to 0.99, and the initial state distribution $D_0$ corresponds to both agents starting in two corners.
%
The above four illustrations correspond to four different $\theta$ parameters, highlighting \agentone's policy $\pi^x_{\theta}$ and the best response policy $\pi^*_{\theta}$ for \agenttwo.
}
\label{fig:Gathering Game}
\end{figure*}

\section{Experiments}\label{sec.experiments}
We evaluate the performance of our algorithms using a gathering game environment, see Figure~\ref{fig:Gathering Game}. Below, we provide details of the experimental setup and then discuss results.

\subsection{Experimental setup}\label{sec.experiments.setup}
\subsubsection{Environment details}
For our experiments, we consider an episodic setting where two agents play the game repeatedly for multiple episodes enumerated as $e=1, 2, \ldots$. Each episode of the game lasts for 500 steps. Now, to translate the episode count to time steps $t$ as used in Algorithm~\ref{alg:algorithm.model} (line 3), we have $t = 500 \times e$ at the end of $e^{\textnormal{th}}$ episode.
%

For any fixed $\theta$, \agentone's policy $\pi_{\theta}^{x}$ is computed first by ignoring the presence of \agenttwo~as described below---this is in line with our motivating applications where \agentone~is the human-agent with a pre-specified policy.
%
In order to compute \agentone's policy $\pi_{\theta}^{x}$, we consider \agentone~operating in a single-agent MDP denoted  as $\mdpM^x(\theta) = (S^{x}, A, R^x_\theta, T^x, \gamma, D^x_0)$ where (i) $s \in S^{x}$ corresponds to the location of \agentone~in the grid-space, (ii) the action space is as described in Figure~\ref{fig:Gathering Game}, (iii) the reward function $R^x_{\theta}$ corresponds to reward associated with two fruit types given by $\theta$, (iv) $T^x$ corresponds to transition dynamics of \agentone~alone in the environment, (v) discount factor $\gamma = 0.99$, and (vi) $D^x_0$ corresponds to \agentone~starting in the upper-left corner (see Figure~\ref{fig:Gathering Game}). Given $\mdpM^x(\theta)$, we compute $\pi_{\theta}^{x}$ as a soft Bellman policy -- suitable to capture sub-optimal human behaviour in applications \cite{ziebart10}.

\looseness-1
From \agenttwo's point of view, each $\theta$ gives rise to a parametric MDP $\mdpM(\theta)$ in which \agenttwo~is operating in the game along with the corresponding \agentone, see Figure~\ref{fig:Gathering Game}. 
\subsubsection{Baselines and implementation details.}
We use three baselines to compare the performance of our algorithms: (i) $\textsc{Rand}$ corresponds to picking a random $\theta \in \Theta$ and using best response policy $\pi^*_{\theta}$, (ii) $\textsc{FixedMM}$ corresponds to the fixed best response (in a minmax sense) policy in Eq.~\ref{maxmin_formulation_static}, and (iii) $\textsc{FixedBest}$ is a variant of $\textsc{FixedMM}$ and corresponds to the fixed best response (in a average sense) policy.



We implemented two variants of \algcontroller which store policies corresponding to $\epsilon'=1$ and $\epsilon' = 0.25$ covers of $\Theta$ (see Corollary~\ref{corollary:pool}), denoted as $\textsc{AdaptPool}_{1}$ and $\textsc{AdaptPool}_{0.25}$ in Figure~\ref{fig:PerfomanceMeasures}.
%
%
%
%
%
\looseness-1
Next, we give specifications of the trained policy network used in \algdqn. We used $\Theta^{\textnormal{train}}$ to be a $0.25$ level discretization of $\Theta$. 
The trained network $\psi$ has 3 hidden layers with leaky RELU-units (with $\alpha= 0.1$) having $64$, $32$, and $16$ hidden units respectively, and a linear output layer with $5$ units (corresponding to the size of action set $|A|$) (see \cite{minh15} for more details on training Deep Q-Network). The input to the neural network is a concatenation of the location of the $2$ agents, and the parameter vector $\theta_t$, where $|\theta_t| = 2$ (this corresponds to the augmented state space described in Section~\ref{sec.alg.dqn}). The location of each agent is represented as a one-hot encoding of a vector of length $25$ corresponding to the number of grid cells
Hence the length of the input vector to the neural network is $25 \times2 + 2~(=52)$. During training, \agenttwo~implemented epsilon-greedy exploratory policies (with exploration rate decaying linearly over training iterations from 1.0 to 0.01). Training lasted for about 50 million iterations.

\looseness-1
Our inference module is based on the MCE-IRL approach \cite{ziebart10} to infer $\theta^{\textnormal{test}}$ by observing actions taken by \agentone's policy. Note that, we are using MCE-IRL to infer the reward function parameters $\theta^{\textnormal{test}}$ used by \agentone~for computing its policy in the MDP $\mdpM^x(\theta^{\textnormal{test}})$ (see ``Environment details" above). At the beginning, the inference module is initialized with $\theta_0 = [0,0]$, and its output at time $t$ given by $\theta_t$ is based on history $O_{t-1}$. In particular, we implemented a sequential variant of MCE-IRL algorithm which updates the estimate $\theta_t$ only at the end of every episode $e$ using stochastic gradient descent  with learning rate $\eta = 0.001$. We refer the reader to \cite{ziebart10} for details on the original MCE-IRL algorithm and to \cite{kamalaruban2019interactive} for the sequential variant.


\subsection{Results: Worst-case and average performance}
\label{sec.expresults}
We evaluate the performance of algorithms on $441$ different $\theta^\textnormal{test}$ obtained by a $0.1$ level discretization of the 2-D parametric space $\Theta := [-1, +1]^2$. For a given $\theta^\textnormal{test}$, the results were averaged over $10$ runs. 
Results are shown in Figure~\ref{fig:PerfomanceMeasures}. As can be seen in Figure~\ref{fig:OptVsAlg_worse}, the worst-case performance of both \algdqn and \algcontroller is significantly better than that of the three baselines ($\textsc{FixedBest}$, $\textsc{Rand}$ and $\textsc{FixedMM}$), indicating robustness of our algorithmic framework.  
In our experiments, the $\textsc{FixedMM}$ and $\textsc{FixedBest}$ baselines correspond to best response policies $\pi^{*}_{\theta}$ for $\theta=[0.1, -1]$ and $\theta = [0,-0.1]$ respectively. Under both these policies, \agenttwo's behavior is qualitatively similar to the one shown in Figure~\ref{fig:Gathering Game:3}. As can be seen, under these policies, \agenttwo~avoids both fruits and avoids any collision; however, this does not allow \agenttwo~to assist \agentone~in collecting fruits even in scenarios where fruits have positive rewards.

In Figure~\ref{fig:irl_module}, we show the convergence behavior of the inference module. Here, \textsc{Worst} shows the worst case performance: As can be seen in the \textsc{Worst} line, there are cases where the performance of the inference procedure is bad, i.e., $\norm{\theta_t - \theta^{\textnormal{test}}}$ is large. This usually happens when different parameter values of $\theta$ results in \agentone~having equivalent policies. In these cases, estimating the exact  $\theta^{\textnormal{test}}$  without any additional information is difficult. In our experiments, we noted that even if $\norm{\theta_t - \theta^{\textnormal{test}}}$ is large, it is often the case that  \agentone's policies $\pi^x_{\theta_t}$ and $\pi^x_{\theta^{\textnormal{test}}}$ are approximately equivalent which is important for getting a good approximation of the transition dynamics $T_{\theta^{\textnormal{test}}}$. Despite the poor performance of the inference module in such cases, the performance of our algorithms is significantly better than the baselines (as is evident in Figure~\ref{fig:OptVsAlg_worse}). In the supplementary material, we provide additional experimental results corresponding to the algorithms' performance for each individual $\theta^{\textnormal{test}}$ to gain further insights.

\begin{figure*}[t!]
\centering
\begin{subfigure}[b]{0.32\linewidth}
   \centering
   \includegraphics[width=0.97\textwidth]{./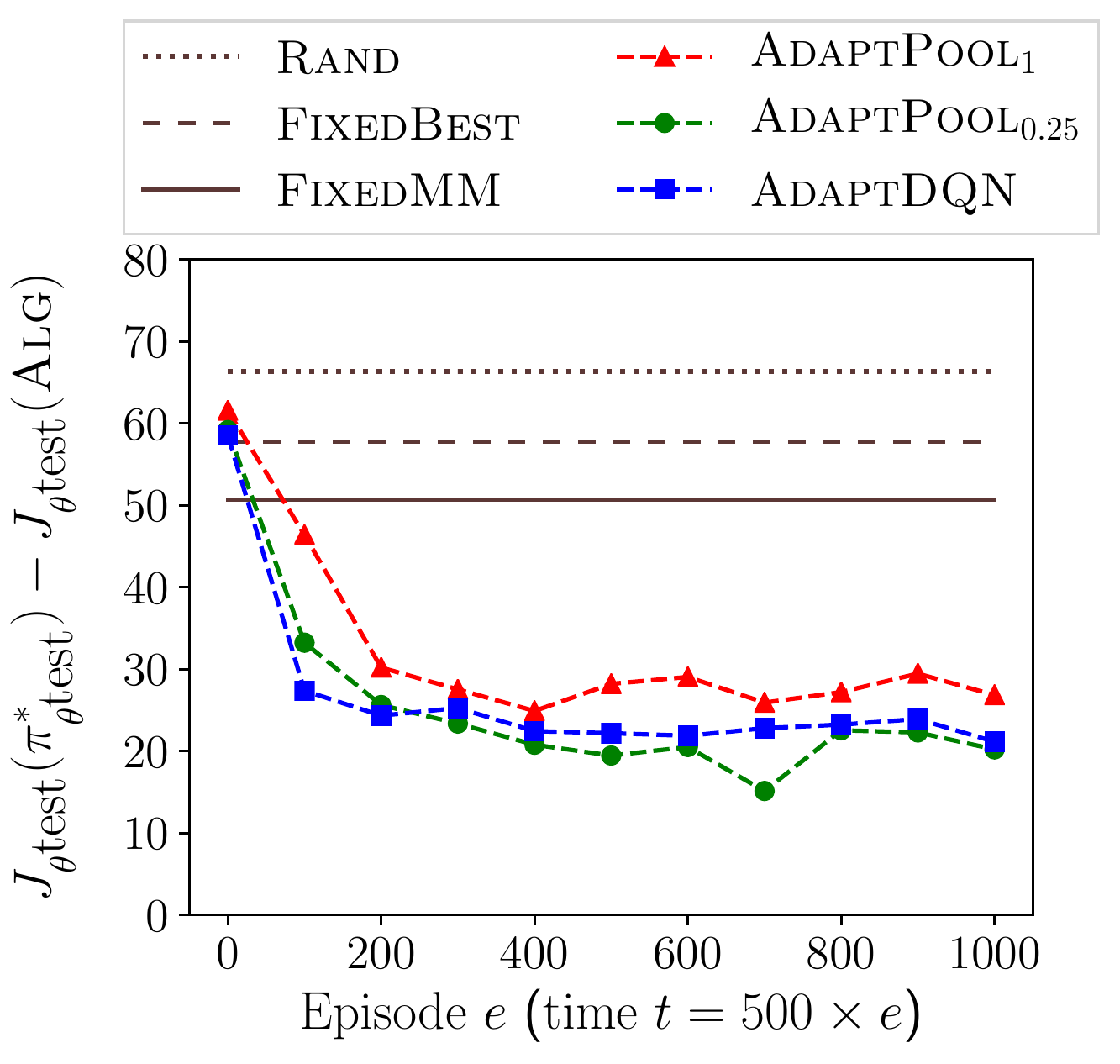}
   \caption{Total reward: Worst-case}
   \label{fig:OptVsAlg_worse}
\end{subfigure}
\begin{subfigure}[b]{0.32\linewidth}
   \centering
   \includegraphics[width=1.0\textwidth]{./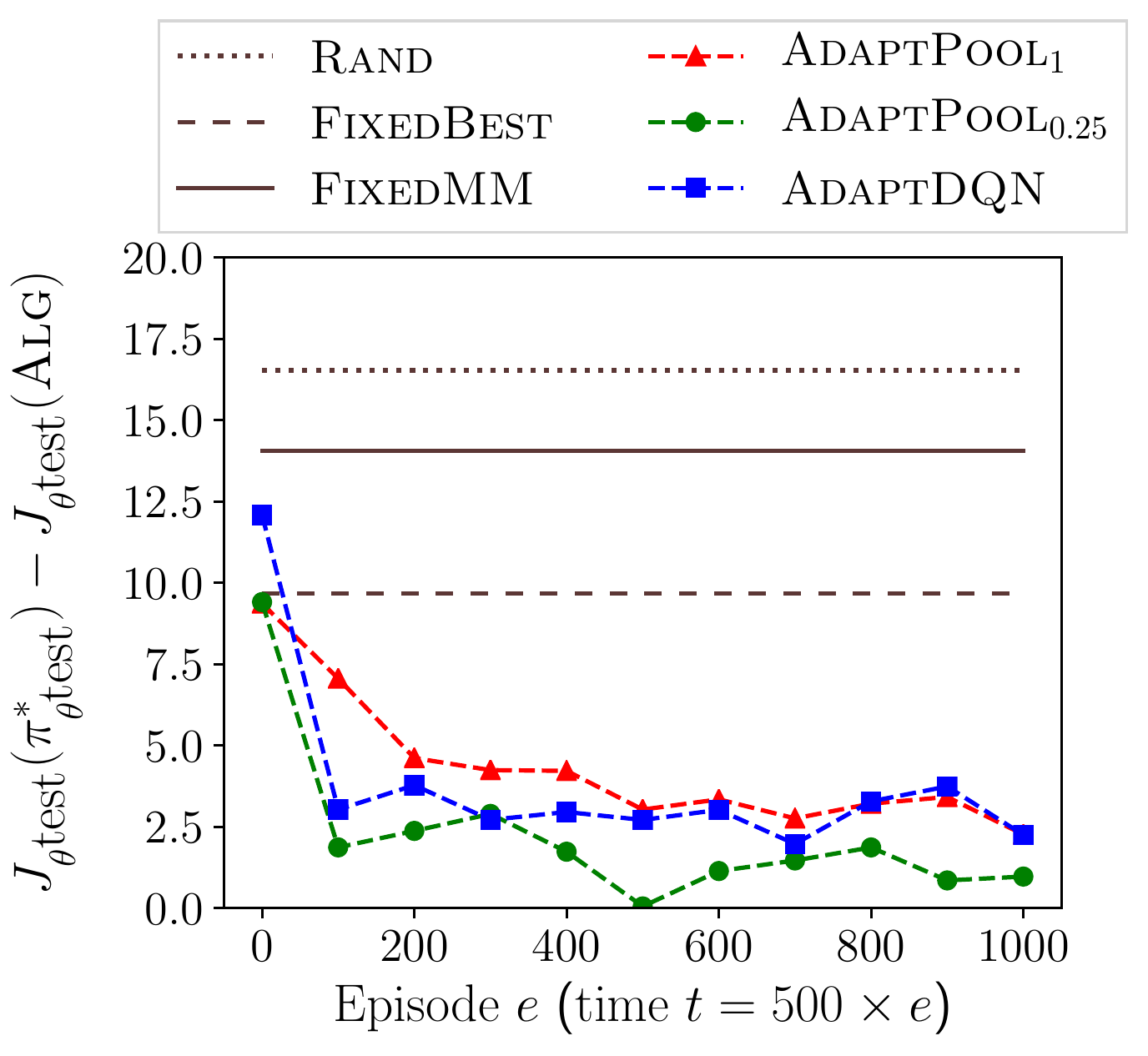}
   \caption{Total reward: Average-case}
   \label{fig:OptVsAlg_avg} 
\end{subfigure}
   \centering
   \begin{subfigure}[b]{0.32\linewidth}
   \includegraphics[width=0.94\textwidth]{./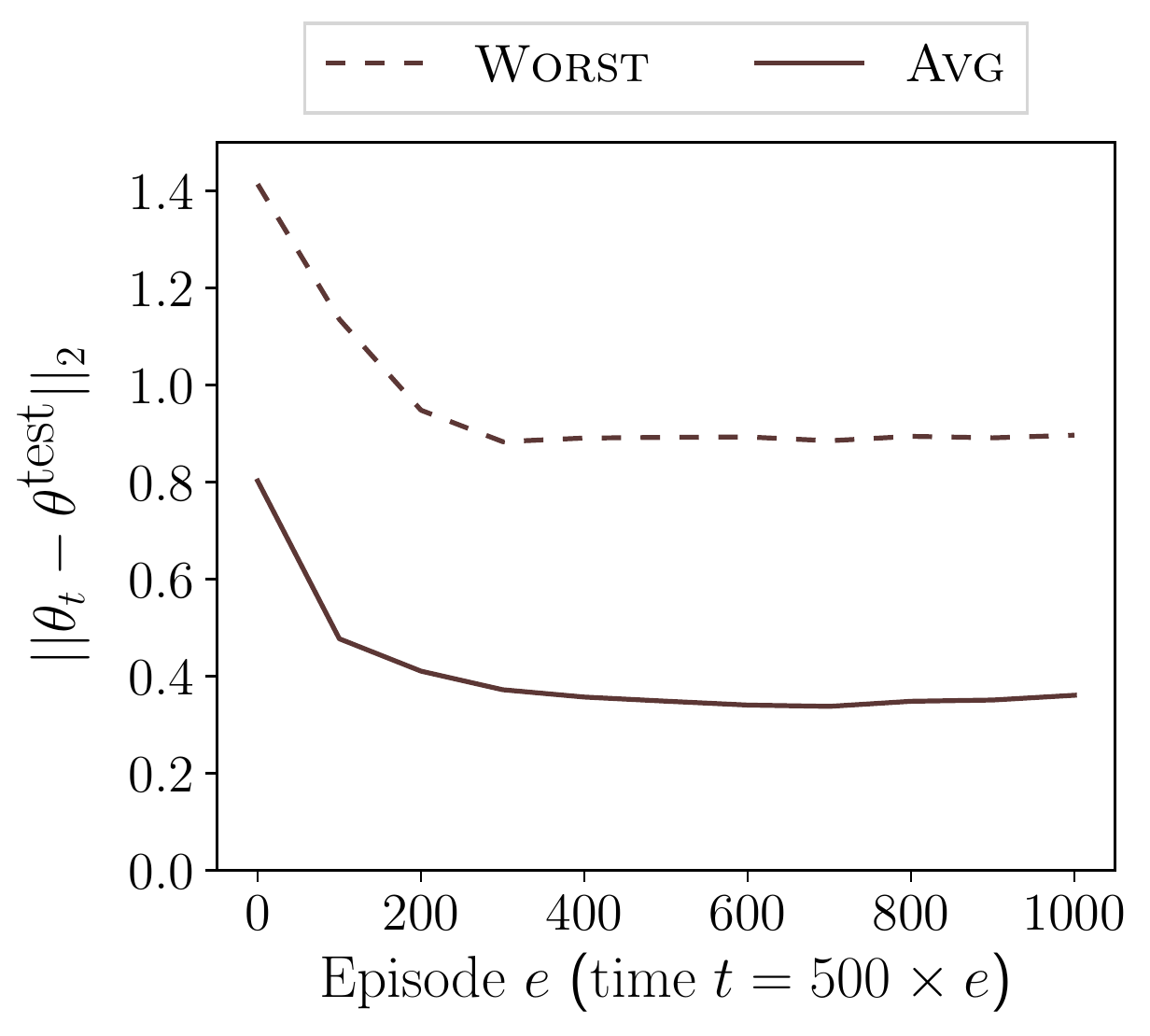}
   \caption{Inference module}
   \label{fig:irl_module}
 \end{subfigure}
\caption{(a) Worst-case performance of both \algdqn and \algcontroller is significantly better than that of the baselines, indicating robustness of our algorithmic framework. (a, b) Two variants of \algcontroller are shown corresponding to  $1$-cover and $0.25$-cover. As expected, the algorithm $\algcontroller_{0.25}$ with larger pool size has better performance compared to the algorithm $\algcontroller_{1}$. (c) Plot shows the convergence behavior of the inference module as more observational data is gathered: \textsc{Avg} shows the average performance (averaged $\norm{\theta_t - \theta^{\textnormal{test}}}$ w.r.t. different $\thetatest$) and \textsc{Worst} shows the worst case performance (maximum   $\norm{\theta_t - \theta^{\textnormal{test}}}$ w.r.t. different $\thetatest$).
} 
    \vspace{-2mm}
\label{fig:PerfomanceMeasures}
\end{figure*}
\vspace{-6mm}
\subsection{Results: Performance heatmaps for each $\theta^{\textnormal{test}}$}
Here, we provide additional experimental results to gain further insights into the performance of our algorithms. These results are presented in Figure~\ref{fig:heatmaps} in the form of heat maps for each individual $\theta^{\textnormal{test}}$: Heat maps either represent performance of algorithms (in terms of the total reward $J_{\thetatest}(\textsc{Alg})$) or the performance of inference procedure (in terms of the norm $\norm{\theta_t - \theta^{\textnormal{test}}}$). These results are plotted in the episode $e=1000$ (cf., Figure~\ref{fig:PerfomanceMeasures} where the performance was plotted over time with increasing $e$).

\begin{figure*}[t!]
	\centering
	\begin{subfigure}[b]{0.19\textwidth}
			\centering
			\includegraphics[width=1\linewidth]{./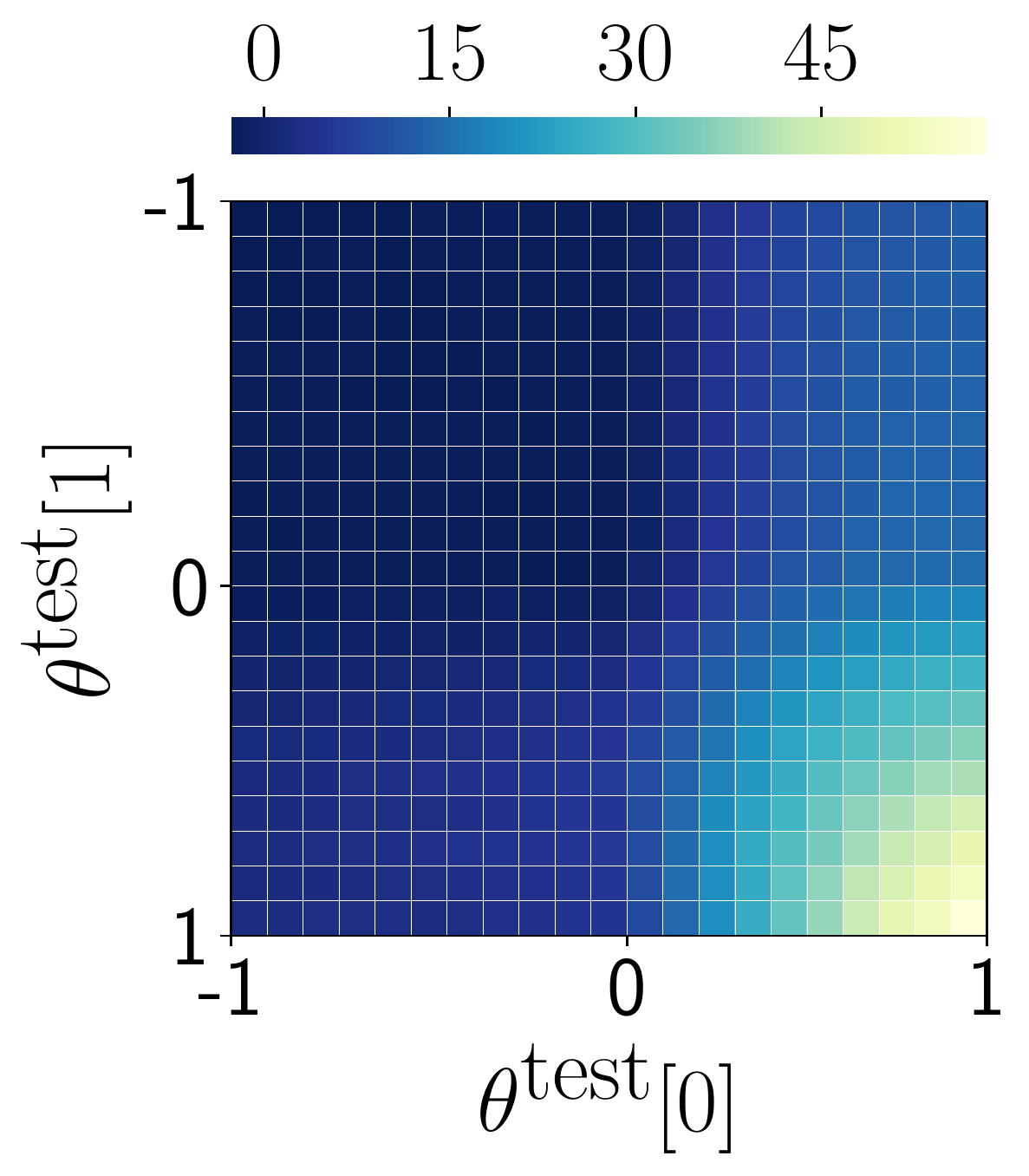}
			\caption{$\pi_{\theta^\textnormal{test}}^{*}$}
			\label{fig:opt}
	\end{subfigure}
	\ 
	\begin{subfigure}[b]{0.19\textwidth}
			\centering
			\includegraphics[width=1\linewidth]{./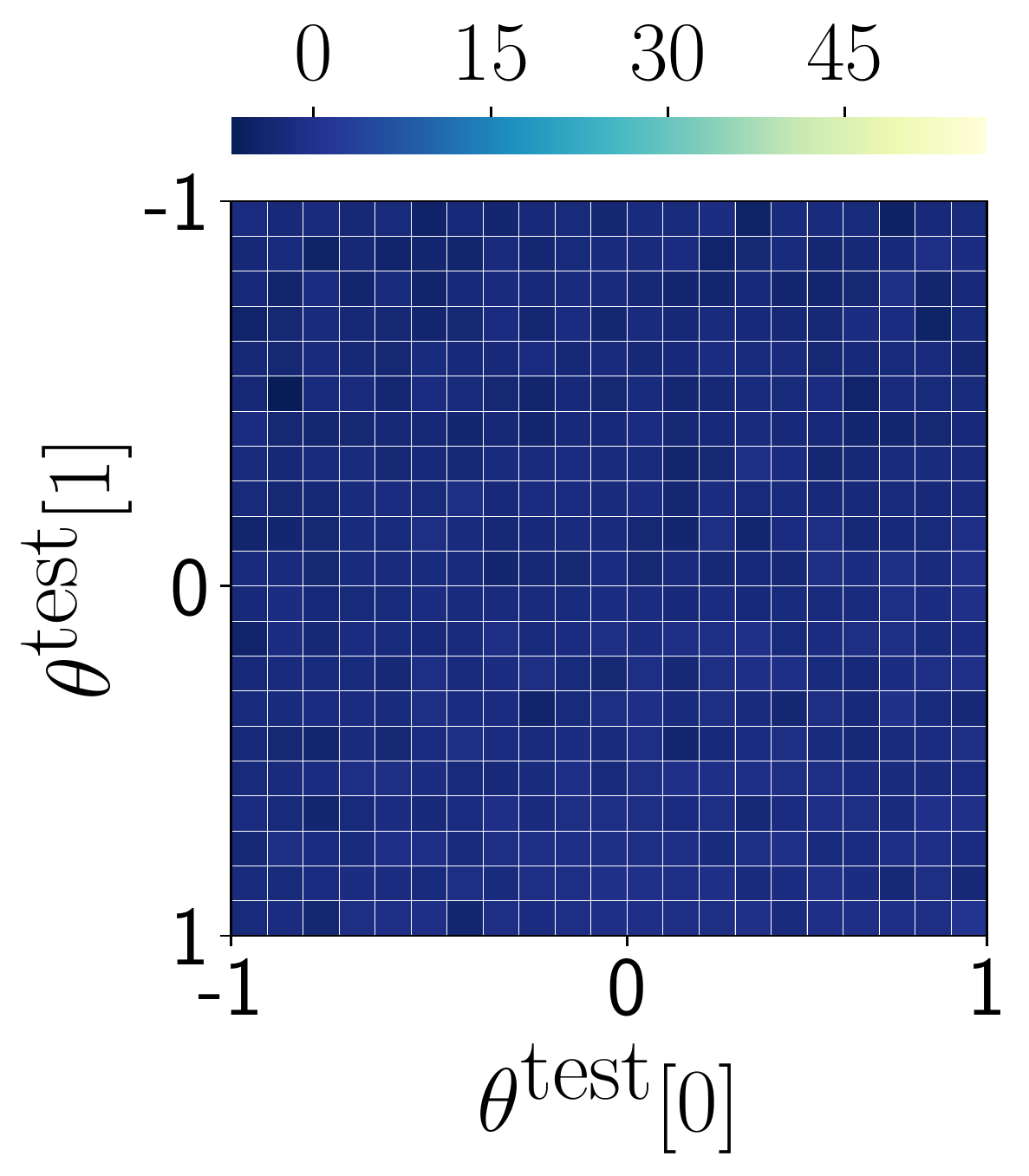}
			\caption{$\textsc{FixedBest}$}
			\label{fig:fixedbest}
	\end{subfigure}
	\ 
	\begin{subfigure}[b]{0.19\textwidth}
			\centering
			\includegraphics[width=1\linewidth]{./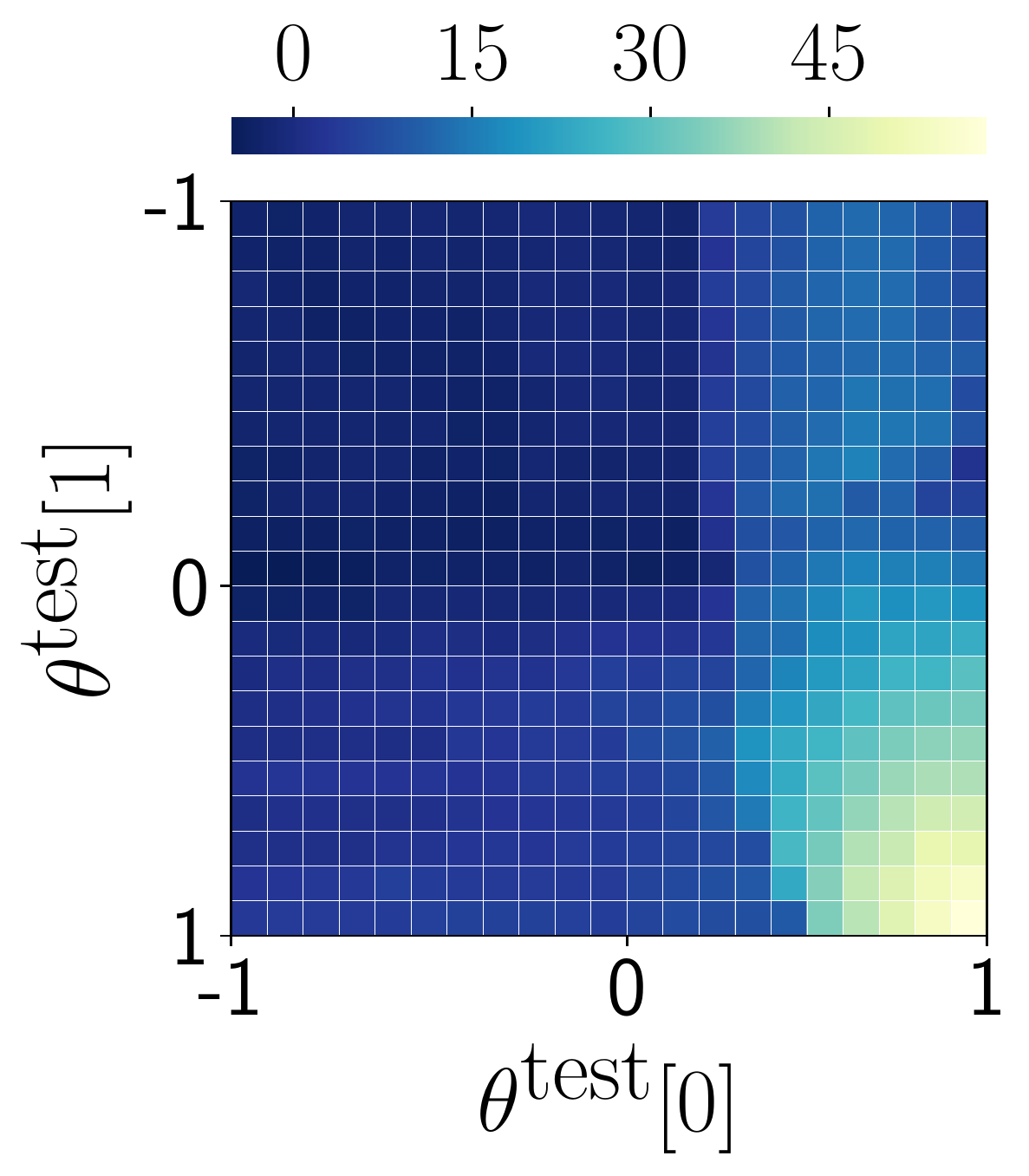}
			\caption{$\textsc{AdaptPool}_{0.25}$}
			\label{fig:pool_0.25}
	\end{subfigure}
	\ 
	\begin{subfigure}[b]{0.19\textwidth}
			\centering
			\includegraphics[width=1\linewidth]{./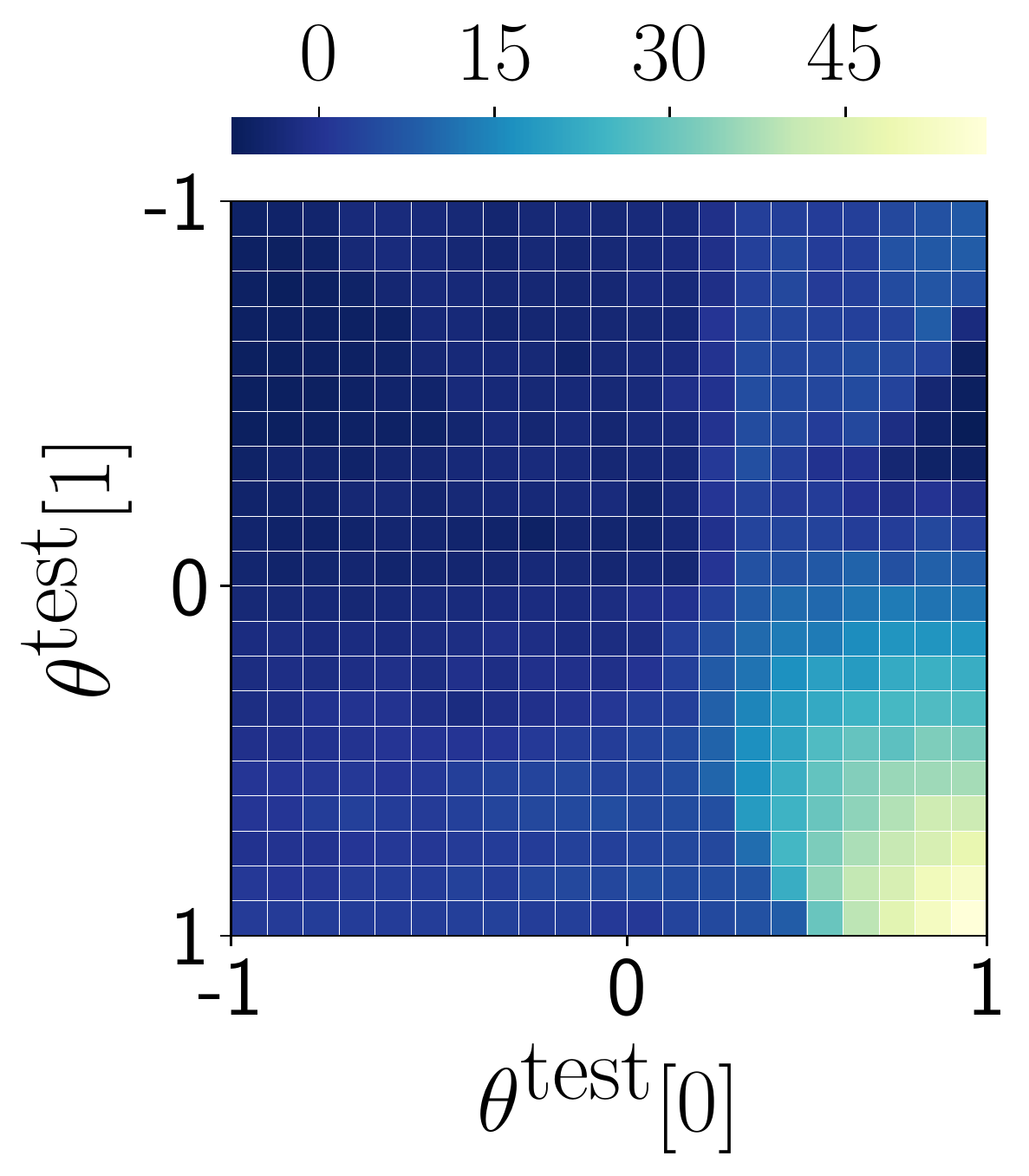}
			\caption{$\textsc{AdaptDQN}$}
			\label{fig:adqn}
	\end{subfigure}
	\ 
	\begin{subfigure}[b]{0.19\textwidth}
			\centering
			\includegraphics[width=1\linewidth]{./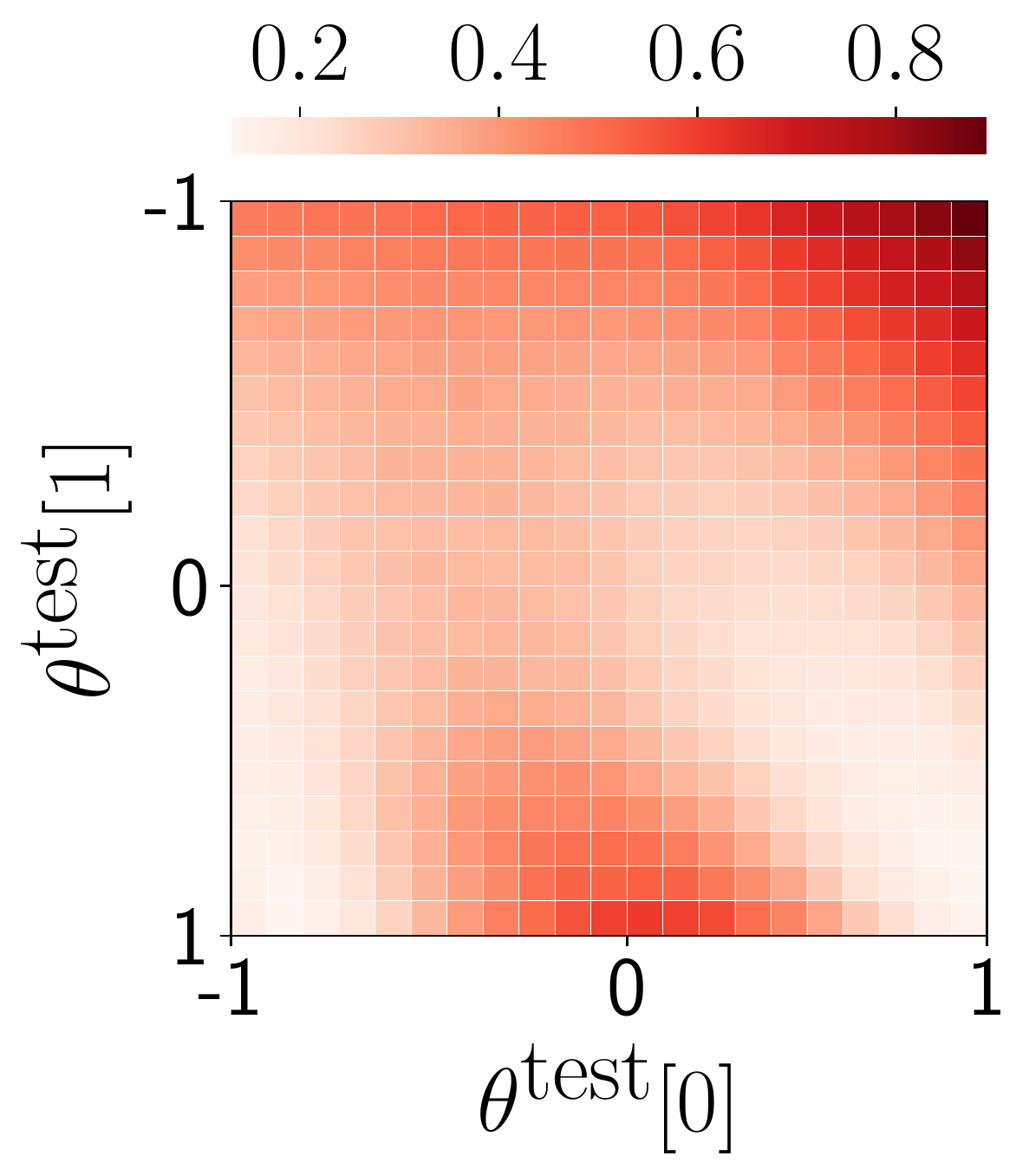}
			\caption{Inference Module}
			\label{fig:inf_module}
	\end{subfigure}
    \vspace{-1mm}	
	\caption{(a, b, c, d) Heat map of the total rewards obtained by different algorithms when measured in the episode $e=1000$. (e) Heat map of the norm $\norm{\theta_t - \theta^{\textnormal{test}}}$, i.e., the gap between the estimated and true parameter $\theta^{\textnormal{test}}$ at the end of episode $e=1000$. The performance of the inference procedure is poor in cases when different parameter values of $\theta^{\textnormal{test}}$ results in \agentone~having equivalent policies. However, in these cases as well, the performance of our algorithms ($\algcontroller_{0.25}$ and $\algdqn$ are shown in the figure) is significantly better than the baselines (\textsc{FixedBest} is shown in the figure).}
	\label{fig:heatmaps}
    \vspace{-4mm}
\end{figure*}

It is important to note that there are cases where the performance of inference procedure is bad, i.e., $\norm{\theta_t - \theta^{\textnormal{test}}}$ is large. This usually happens when different parameter values of $\theta^{\textnormal{test}}$ results in \agentone~having equivalent policies. In these cases, estimating the exact  $\theta^{\textnormal{test}}$ without any additional information is difficult. In our experiments, we noted that even if $\norm{\theta_t - \theta^{\textnormal{test}}}$ is large, it is often the case that \agentone's policies $\pi^x_{\theta_t}$ and $\pi^x_{\theta^{\textnormal{test}}}$ are approximately equivalent which is important for getting a good approximation of the transition dynamics $T_{\theta^{\textnormal{test}}}$. Despite the poor performance of the inference module in such cases, the performance of our algorithms (see $\algcontroller_{0.25}$ and $\algdqn$ in the figure) is significantly better than the baselines (see \textsc{FixedBest} in the figure).

\vspace{-2mm}
\section{Related Work}\label{sec.relatedwork}
\vspace{-2mm}

\textbf{Modeling and inferring about other agents.}
The inference problem has been considered in the literature in various forms. For instance, \cite{grover18learning} consider the problem of learning policy representations that can be used for interacting with unseen agents when using representation-conditional policies. They also consider the case of inferring another agent's representation (parameters) during test time.
%
\cite{macindoe2012pomcop} consider planners for collaborative domains that can take actions to learn about the intent of another agent or hedge against its uncertainty.
\cite{DBLP:conf/hri/NikolaidisRGS15} cluster human users into types and aim to  infer the type of new user online, with the goal of executing the policy for that type. 
They test their approach in robot-human interaction but do not provide any theoretical analysis of their approach.
%
Beyond reinforcement learning, the problem of modeling and inferring about other agents has been studied in other applications such as personalization of web search ranking results by inferring user's preferences based on their online activity~\cite{white2013enhancing,white2014devices,singla2014enhancing}.

\textbf{Multi-task and meta-learning.}
Our problem setting can be interpreted as a multi-task RL problem in which each possible \agentone{} corresponds to a different task, or as a meta-learning RL problem in which the goal is to learn a policy that can quickly adapt to new partners.
\cite{hessel19multitask} study the problem of multi-task learning in the RL setting in which a single agent has to solve multiple tasks, e.g., solve all Atari games. However, they do not consider a separate test set to measure generalization of trained agents but rather train and evaluate on the same tasks.
\cite{smundsson2018meta} consider the problem of meta learning for RL in the context of changing dynamics of the environment and approach it using a Gaussian processes and a hierarchical latent variable model approach.

\textbf{Robust RL.}
The idea of robust RL is to learn policies that are robust to certain types of errors or mismatches.
In the context of our paper, mismatch occurs in the sense of encountering human agents that  have not been encountered at training time and the learned policies should be robust in this situation.
\cite{DBLP:conf/icml/PintoDSG17} consider training of policies in the context of a \emph{destabilizing adversary} with the goal of coping with model mismatch and data scarcity.
\cite{DBLP:conf/nips/RoyXP17} study the problem of RL under model mismatch such that the learning agent cannot interact with the actual test environment but only a reasonably close approximation.
The authors develop robust model-free learning algorithms for this setting. 
%

\textbf{More complex interactions, teaching, and steering.} In our paper, the type of interaction between two agents is limited as \agenttwo{} does not affect \agentone{}'s behaviour, allowing us to gain a deeper theoretical understanding of this setting.
There is also a related literature on ``steering'' the behavior of other agent. For example, (i) the \textit{environment design} framework of~\cite{zhang2009policy}, where  one agent tries to steer the behavior of another agent by modifying its reward function, (ii) the \textit{cooperative inverse reinforcement learning} of~\cite{hadfield-menell16cooperative}, where the human uses demonstrations to reveal a proper reward function to the AI agent, and (iii) the \textit{advice-based interaction} model~\cite{amir2016interactive},  where the goal is to communicate advice to a sub-optimal agent on how to act.

\looseness-1
\textbf{Dealing with non-stationary agents.}
The work of \cite{DBLP:conf/aaaiss/EverettR18} is closely related to ours: they design a \emph{Switching Agent Model} (SAM) that combines deep reinforcement learning with opponent modelling to robustly switch between multiple policies. \cite{DBLP:conf/nips/ZhengMHZYF18} also consider a similar setting of detecting non-stationarity and reusing policies on the fly, and introduce \textit{distilled policy network} that serves as the policy library. Our algorithmic framework is similar in spirit to these two papers, however, in our setting, the focus is on acting optimally against an unknown agent whose behavior is stationary and we provide theoretical guarantees on the performance of our algorithms. 
%
\cite{DBLP:conf/aaai/SinglaH018} have considered the problem of learning with experts advice where experts are not stationary and are learning agents themselves. However, their focus is on designing a meta-algorithm on how to coordinate with these experts and is technically very different from ours.
%
A few other recent papers have also considered repeated human-AI interaction where the human agent is non-stationary and is evolving its behavior in response to AI agent (see \cite{radanovic2019learning,nikolaidis2017game}. Prior work also considers a learner that is aware of the presence of other actors (see \cite{DBLP:conf/atal/FoersterCAWAM18,raileanu18modeling}).
\vspace{-3.5mm}
\section{Conclusions}
\vspace{-2.5mm}
\looseness-1 Inspired by real-world applications like virtual personal assistants, we studied the problem of designing AI agents that can robustly cooperate with new people in human-machine partnerships.
Inspired by our motivating applications, we focused on an important practical aspect that there is often a clear distinction between the training and test phase: the explicit reward information is only available during training but adaptation is also needed during testing.
We provided a framework for designing adaptive policies and gave theoretical insights into its robustness. In experiments, we demonstrated that these policies can achieve good performance when interacting with previously unseen agents.
\subsubsection*{Acknowledgements}
This work was supported by Microsoft Research through its PhD Scholarship Programme.

\bibliography{main}

\iftoggle{longversion}{
\clearpage
\onecolumn
\appendix
{\allowdisplaybreaks
\section{Proof of Theorem~\ref{thm_badperformance}}
\begin{proof}
We provide a proof via constructing a problem instance. Let the parametric space be $\Theta = \{\theta_1, \theta_2\}$. Next, we define two MDPs $\mathcal{M}(\theta_1) := (S, A, T_{\theta_1}, R_{\theta_1}, \gamma, \mathcal{D}_0)$  and $\mathcal{M}(\theta_2) := (S, A, T_{\theta_2}, R_{\theta_2}, \gamma, \mathcal{D}_0)$ below:
\begin{itemize} 
\item set of states is given by $S = \{gold, end\}$, with $s$ denoting a generic state. Here, state $gold$ represents a state where reward can be accumulated, and state $end$ is a terminal state.
\item set of actions is given by $A = \{a_1, a_2\}$  with $a \in A$ denoting a generic action for \agenttwo~and $a^x \in A$ denoting a generic action for \agentone. 
\item for $\theta \in \Theta$, we have $R_{\theta}(s, a) = 0$ if $s = end$ and $R_{\theta}(s, a) = \rmax$ if $s=gold$ where $\rmax > 0$. Note that the reward function only depends on the state and not on the action taken. Also, the reward function is same for both $\theta_1$ and $\theta_2$.
\item discount factor $\gamma \in [0, 1)$ and initial state distribution $\mathcal{D}_0$ is given by $\mathcal{D}_0(s_0 = gold) = 1$.
\item most crucial part of this problem instance is the transition dynamics $T_{\theta_1}$ and $T_{\theta_2}$ that we specify below. Note that, for $\theta \in \Theta$, $T_{\theta}(s'~|~s, a)=\E_{a^x}[T^{x,y}(s'~|~s, a, a^x)]$, where $a^x \sim \pi^x_{\theta}(\cdot~|~s)$, i.e., $T_{\theta}(s'~|~s, a)$ corresponds to the transition dynamics derived from a two agent MDP for which \agentone{}'s policy is $\pi^x_{\theta}$. We define transition dynamics $T^{x,y}(s'~|~s, a, a^x)$ below in Figure~\ref{appendix.theorem1.fig-dynamics}, and policies $\pi^x_{\theta_1}$ and $\pi^x_{\theta_2}$ below in Figure~\ref{appendix.theorem1.fig-policy}.
\end{itemize}

\renewcommand\thesubfigure{\roman{subfigure}}
\begin{figure*}[!h]
	\centering
	\begin{subfigure}[b]{0.45\textwidth}
		\centering
		\begin{tabular}[h!]{c|c|c|}
			\diagbox{\woagenttwo's action}{\woagentone's action} & $a^x = a_1$ & $a^x = a_2$ \\ \hline
			$a = a_1$   & $1$ & $0$ \\ \hline
			$a = a_2$  & $0$ & $1$  \\ \hline
		\end{tabular}
		\caption{$T^{x,y}(s'=gold~|~s=gold, a, a^x)$}
  	\end{subfigure}
	\hfill
	\begin{subfigure}[b]{0.45\textwidth}
		\centering
		\begin{tabular}[h!]{c|c|c|}
			\diagbox{\woagenttwo's action}{\woagentone's action} & $a^x = a_1$ & $a^x = a_2$ \\ \hline
			$a = a_1$   & $0$ & $0$ \\ \hline
			$a = a_2$  & $0$ & $0$  \\ \hline
		\end{tabular}
		\caption{$T^{x,y}(s'=gold~|~s=end, a, a^x)$}		
  \end{subfigure}
  \caption{Transition dynamics of a two agent MDP with $a \in A$ denoting a generic action for \agenttwo~and $a^x \in A$ denoting a generic action for \agentone. (i) From state $s=gold$, if  both agents take the same action, the next state is $s=gold$, otherwise the next state is $s=end$. (ii) From state $s=end$, any pair of actions results in the next state as $s=end$.}		
  \label{appendix.theorem1.fig-dynamics}
\end{figure*}

\begin{figure*}[!h]
	\centering
	\begin{subfigure}[b]{0.45\textwidth}
		\centering
		\begin{tabular}[h!]{c|c|c|}
			\diagbox{state}{action} & $a^x = a_1$ & $a^x = a_2$ \\ \hline
			$s = gold$   & $1$ & $0$ \\ \hline
			$s = end$  & $0.5$ & $0.5$  \\ \hline
		\end{tabular}
		\caption{\agentone's policy $\pi^x_{\theta_1}$ for $\theta_1$}
  	\end{subfigure}
	\hfill
	\begin{subfigure}[b]{0.45\textwidth}
		\centering
		\begin{tabular}[h!]{c|c|c|}
			\diagbox{state}{action} & $a^x = a_1$ & $a^x = a_2$ \\ \hline
			$s = gold$   & $0$ & $1$ \\ \hline
			$s = end$  & $0.5$ & $0.5$  \\ \hline
		\end{tabular}
		\caption{\agentone's policy $\pi^x_{\theta_2}$ for $\theta_2$}		
  \end{subfigure}
  \caption{Policies for \agentone~for $\theta_1$ and $\theta_2$. For our proof, only the actions in state $s=gold$ are important. Hence, we have set $\pi^x_{\theta_1}(.~|~s=end)$ and $\pi^x_{\theta_2}(.~|~s=end)$ to be a uniform probability of picking actions.}
  \label{appendix.theorem1.fig-policy}
\end{figure*}

Next, in Figure~\ref{appendix.theorem1.fig-mdps}, we show two MDPs $\mdpM(\theta_1)$ and $\mdpM(\theta_2)$ as perceived by \agenttwo.  It is easy to see that the best response policies for \agenttwo~are given as follows: (i) for $\theta_1$, $\pi^*_{\theta_1}(a_1~|~s=gold) = 1$, and (ii) for $\theta_2$, $\pi^*_{\theta_2}(a_2~|~s=gold) = 1$. Any action can be taken from state $s=end$ as it brings zero reward and agent continues to stay in $s=end$. Also, these best response policies have a total reward given by $J_{\theta_1}(\pi^*_{\theta_1}) = \frac{\rmax}{1-\gamma}$ and $J_{\theta_2}(\pi^*_{\theta_2}) = \frac{\rmax}{1-\gamma}$.
\begin{figure*}[!h]
	\centering
	\begin{subfigure}[b]{0.45\textwidth}
		\centering
		\includegraphics[width=0.8\textwidth]{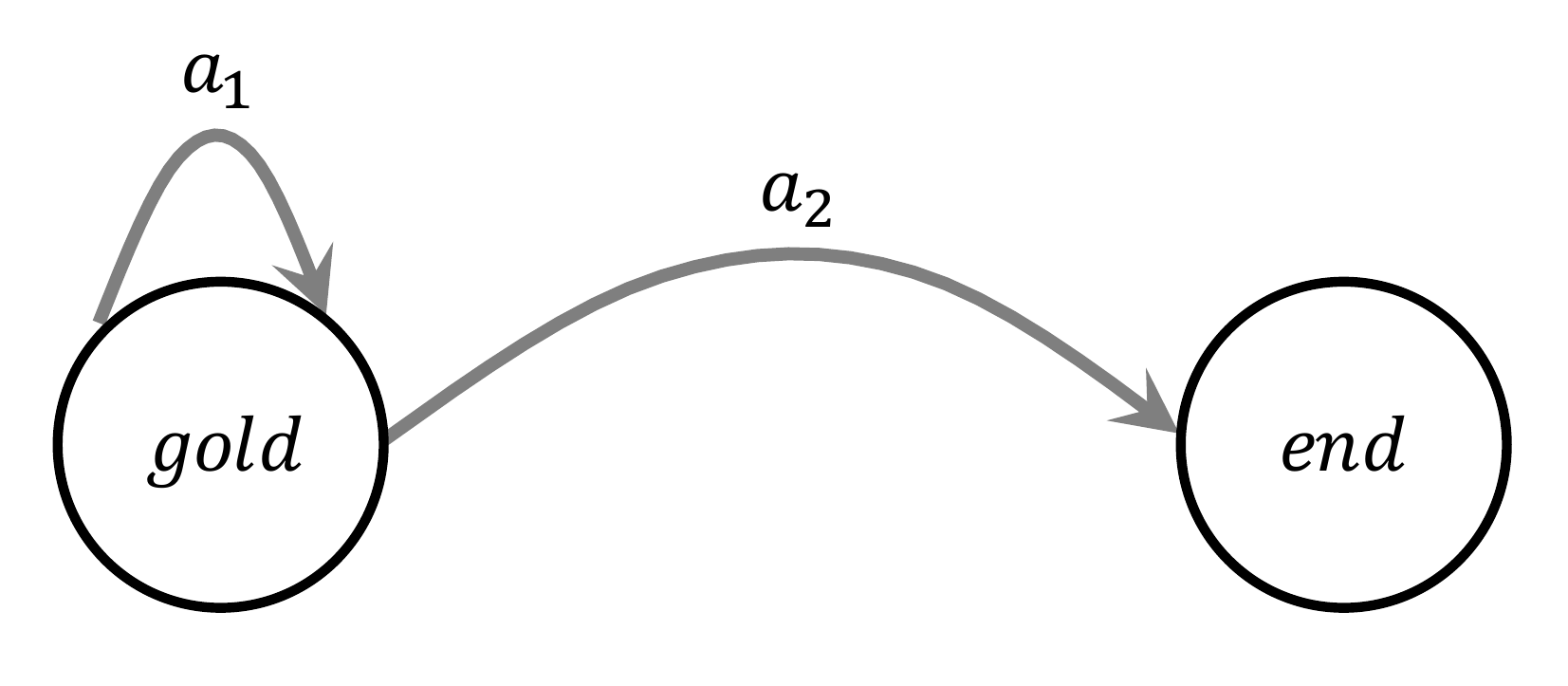}
		\caption{MDP $\mathcal{M}(\theta_1)$}
  	\end{subfigure}
	\hfill
	\begin{subfigure}[b]{0.45\textwidth}
		\centering
		\includegraphics[width=0.8\textwidth]{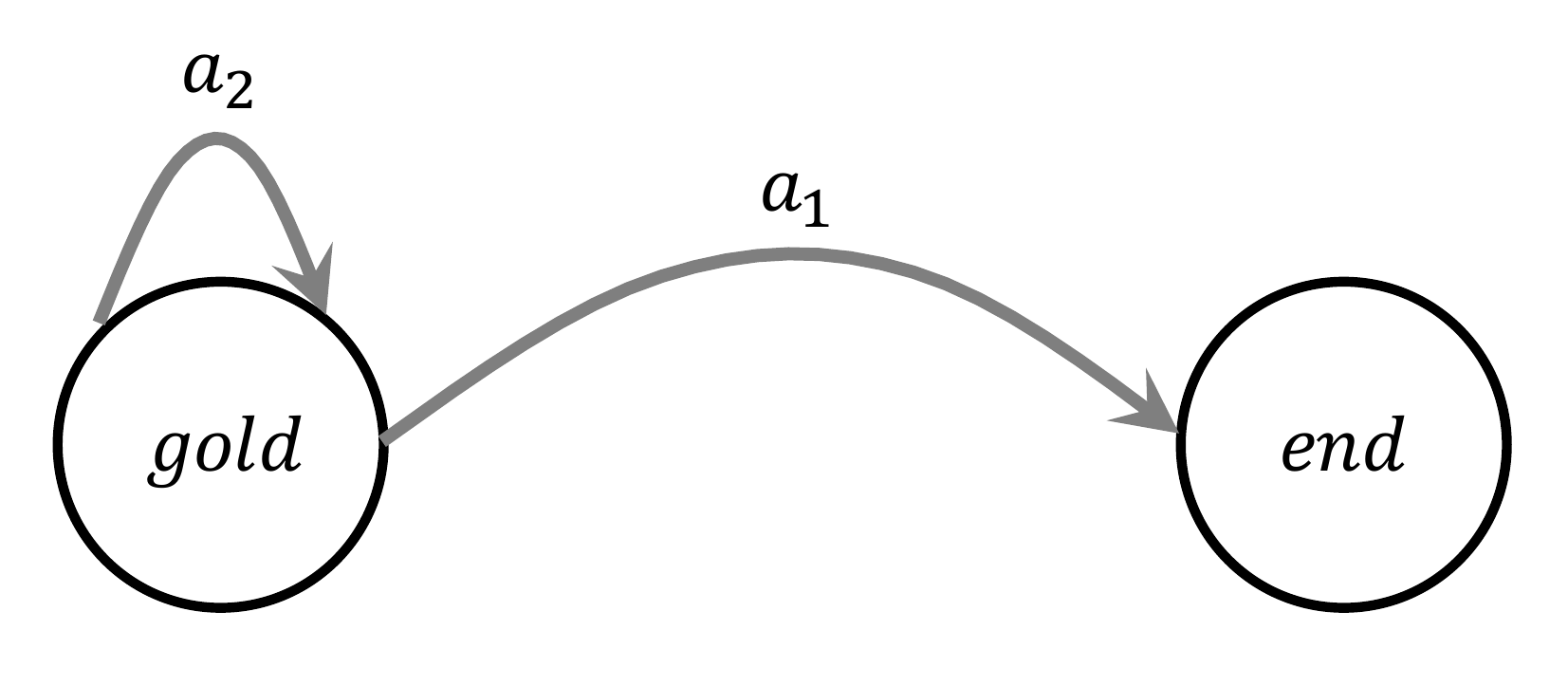}
		\caption{MDP $\mathcal{M}(\theta_2)$}
  	\end{subfigure}
  \caption{MDPs as perceived by \agenttwo~for $\theta_1$ and $\theta_2$. Note that arrows from the state $s=end$ are omitted as any action by \agenttwo~results in the next state as $s=end$.}		
  \label{appendix.theorem1.fig-mdps}
\end{figure*}

However, when the underlying parameter $\theta$ is unknown, the best response (in a maxmin sense) policy $\pi \in \Pi$ of \agenttwo{} as defined in Equation~\ref{maxmin_formulation_static} is given by:
\begin{align*}
\pi^*_{\Theta} = \argmin_{\pi \in \Pi} \max_{\theta \in \Theta} \Big(J_\theta(\pi^*_\theta) - J_\theta(\pi)\Big)
 \end{align*}

Next, we compute $\pi^*_{\Theta}$ for our problem instance. When considering the space of policies $\Pi$, it is enough to focus only on the state $s=gold$ and consider policies which take action $a_1$ from state  $s=gold$  with probability $p$ where $p \in [0, 1]$. For any such policy such that $\pi(a_1~|~gold) = p$, we can  compute  the following:
\begin{align*}
J_{\theta_1}(\pi) &= \frac{\rmax}{1-p \cdot \gamma} \quad \textnormal{and} \quad J_{\theta_2}(\pi) = \frac{\rmax}{1-(1 - p) \cdot \gamma}
\end{align*}

It can easily be shown that $\pi^*_{\Theta}$ is the policy given by $p=0.5$, i.e., $\pi^*_{\Theta}(a_1~|~ gold) = \pi^*_{\Theta}(a_2~|~gold) = 0.5$. Next we focus on the primary quantity of interest in the theorem, i.e., 
\begin{align*}
\max_{\theta \in \Theta} \Big(J_\theta(\pi^*_\theta) - J_\theta(\pi^*_\Theta)\Big)
\end{align*} 

As mentioned earlier, we have $J_\theta(\pi^*_\theta)  =  \frac{\rmax}{1-\gamma}$ for both $\theta_1$ and $\theta_2$. Also, it is easy to compute that $J_\theta(\pi^*_\Theta) = \frac{\rmax}{1 - \frac{\gamma}{2}} \leq 2\cdot\rmax$ for both $\theta_1$ and $\theta_2$. Hence, we can show that 
\begin{align*}
\max_{\theta \in \Theta} \Big(J_\theta(\pi^*_\theta) - J_\theta(\pi^*_\Theta)\Big) \geq  \frac{\rmax}{1-\gamma} - 2
\end{align*} 
which is arbitrary large when $\gamma$ is close to 1 or for large values of \rmax.



\end{proof}

\section{Proof of Theorem~\ref{theorem:approximateMDP}}
In this section, we provide a proof of Theorem~\ref{theorem:approximateMDP}. The proof builds up on a few technical lemmas that we introduce first. 

\subsection{Approximately-equivalent MDPs}
First, we introduce a generic notion of approximately-equivalent MDPs and derive a few technical results for them that are useful to prove Theorem~\ref{theorem:approximateMDP}. This notion and technical results are adapted from the work by \cite{ApproximateEquival}.


\begin{definition} [approximately-equivalent MDPs, adapted from \cite{ApproximateEquival}] \label{appendix.def.equivalentMDPs}
	Suppose we have two MDPs $\mathcal{M}_1 = (S, A, T_1, R_1, \gamma, D_0)$ and $\mathcal{M}_2 = (S, A, T_2, R_2, \gamma, D_0)$,  and rewards are bounded in $[0,r_{max}]$. We call $\mathcal{M}_1$ and $\mathcal{M}_2$ as $(\epsilon_r, \epsilon_p)$ approximately-equivalent if the following holds: 
		\begin{align*}
			\max_{a \in A, s \in S} \norm{T_1(\cdot~|~s,a) - T_2(\cdot~|~s,a)}_1 \leq \epsilon_p \\
			\max_{a \in A, s \in S} |R_1(s, a) - R_2(s, a)| \leq \epsilon_r	\cdot \rmax 	
		\end{align*}
\end{definition}

Next, we state a useful technical lemma, which is adapted from the results of \cite{ApproximateEquival}.
\begin{lemma}\label{lm:eps-eq}
	Suppose we have two $(\epsilon_r, \epsilon_p)$ approximately-equivalent MDPs $\mathcal{M}_1$ and $\mathcal{M}_2$. Let $\pi_1$ and $\pi_2$ denote optimal policies (not necessarily unique) for $\mathcal{M}_1$ and $\mathcal{M}_2$ respectively. Let $V_{\mathcal{M}_1}^{\pi_1}$ denote the vector of value function per state for policy $\pi_1$ in MDP $\mathcal{M}_1$; similarly, $V_{\mathcal{M}_2}^{\pi_1}$ denotes the vector of value function per state for policy $\pi_1$ in MDP $\mathcal{M}_2$. We can bound these two vectors of value functions as follows:
	\begin{align}
		\norm{V_{\mathcal{M}_1}^{\pi_1} - V_{\mathcal{M}_2}^{\pi_1}}_\infty \leq \frac{\epsilon_r \cdot \rmax}{1-\gamma} + \frac{\gamma \cdot \epsilon_p \cdot \rmax}{(1-\gamma)^{2}}   
	\end{align}
\end{lemma}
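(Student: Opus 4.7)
\textbf{Proof plan for Lemma~\ref{lm:eps-eq}.} My plan is to run a standard perturbation argument on the Bellman evaluation equations for $\pi_1$ in the two MDPs, isolate the contributions from the reward mismatch and the transition mismatch, and then close a self-referential inequality on $\lVert V^{\pi_1}_{\mathcal{M}_1} - V^{\pi_1}_{\mathcal{M}_2}\rVert_\infty$.

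First, I would write out the Bellman equations
\begin{align*}
V^{\pi_1}_{\mathcal{M}_i}(s) = \sum_{a} \pi_1(a\mid s)\Big[R_i(s,a) + \gamma \sum_{s'} T_i(s'\mid s,a)\, V^{\pi_1}_{\mathcal{M}_i}(s')\Big], \quad i = 1,2,
\end{align*}
subtract them, and insert the telescoping term $T_2(s'\mid s,a)\,V^{\pi_1}_{\mathcal{M}_1}(s')$ so that the difference splits cleanly into three pieces: (i) a reward-gap term $R_1 - R_2$, (ii) a transition-gap term $(T_1 - T_2)\,V^{\pi_1}_{\mathcal{M}_1}$, and (iii) a contraction term $\gamma\,T_2\,(V^{\pi_1}_{\mathcal{M}_1} - V^{\pi_1}_{\mathcal{M}_2})$.

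Next, I would take absolute values and the supremum over $s$. The reward-gap term is bounded by $\epsilon_r \cdot r_{\max}$ via approximate equivalence. For the transition-gap term I use H\"older: $\bigl|\sum_{s'}(T_1-T_2)(s'\mid s,a)\,V^{\pi_1}_{\mathcal{M}_1}(s')\bigr| \leq \lVert T_1(\cdot\mid s,a)-T_2(\cdot\mid s,a)\rVert_1 \cdot \lVert V^{\pi_1}_{\mathcal{M}_1}\rVert_\infty \leq \epsilon_p \cdot r_{\max}/(1-\gamma)$, where the value function bound is the standard one from bounded rewards in $[0,r_{\max}]$ with discount $\gamma$. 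The contraction term is bounded by $\gamma \cdot \lVert V^{\pi_1}_{\mathcal{M}_1} - V^{\pi_1}_{\mathcal{M}_2}\rVert_\infty$ since $T_2(\cdot\mid s,a)$ is a probability distribution and $\pi_1(\cdot\mid s)$ averages convexly.

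Combining these three bounds yields
\begin{align*}
\lVert V^{\pi_1}_{\mathcal{M}_1} - V^{\pi_1}_{\mathcal{M}_2}\rVert_\infty \;\leq\; \epsilon_r \cdot r_{\max} + \frac{\gamma \cdot \epsilon_p \cdot r_{\max}}{1-\gamma} + \gamma \cdot \lVert V^{\pi_1}_{\mathcal{M}_1} - V^{\pi_1}_{\mathcal{M}_2}\rVert_\infty,
\end{align*}
which after moving the last term to the left and dividing by $(1-\gamma)$ gives exactly the claimed bound. There is no real obstacle here; the only mildly delicate step is choosing the right telescoping so that the transition mismatch acts on the \emph{known-norm} vector $V^{\pi_1}_{\mathcal{M}_1}$ (so we can apply $\lVert V\rVert_\infty \leq r_{\max}/(1-\gamma)$) while the residual carries the remaining $V^{\pi_1}_{\mathcal{M}_1}-V^{\pi_1}_{\mathcal{M}_2}$ term that is absorbed via the contraction trick. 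Note that optimality of $\pi_1$ is never used in this argument, which is consistent with the statement being about evaluation of a fixed policy.
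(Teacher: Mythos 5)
Your proof is correct and rests on the same decomposition as the paper's: telescope $T_1V_1 - T_2V_2$ into $(T_1-T_2)V_1 + T_2(V_1-V_2)$, bound the first piece by $\epsilon_p\cdot\lVert V_1\rVert_\infty \le \epsilon_p\, r_{\max}/(1-\gamma)$ and the reward gap by $\epsilon_r\, r_{\max}$, and absorb the residual via the $\gamma$-contraction. The only (cosmetic) difference is that the paper closes the recursion by induction over the iterates of policy evaluation and then passes to the limit $m\to\infty$, whereas you work directly at the fixed point and rearrange the self-referential inequality; both are valid and yield the identical bound.
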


\begin{proof}For ease of presentation of the key ideas, we will write this proof considering reward functions that only depend on the current state and not on actions taken; the proof can be easily extended to generic reward functions.

The proof idea is based on looking at intermediate outputs of a policy-iteration algorithm \cite{suttonbarto1998} when evaluating $\pi_1$  in $\mdpM_1$ and $\mdpM_2$. Let us consider an iteration $m$ of policy-iteration algorithm and let us use $V_{\mdpM_1, m}^{\pi_1}$ to denote the vector of value functions when evaluating $\pi_1$ in $\mdpM_1$. Similarly, we use $V_{\mdpM_2 ,m}^{\pi_1}$ to denote the vector of value functions when evaluating $\pi_1$ in $\mdpM_2$ at iteration $m$ of policy-iteration algorithm. Note that as $m \rightarrow \infty$, the vectors $V_{\mdpM_1, m}^{\pi_1}$ and $V_{\mdpM_2, m}^{\pi_1}$ converge to $V_{\mdpM_1}^{\pi_1}$ and $V_{\mdpM_2}^{\pi_1}$ respectively.


We prove the lemma using an inductive argument. Let $V_{\textnormal{max}}$ denote the maximum value of a state value function for any policy in $\mdpM_1$ or in $\mdpM_2$. We claim that if we start with same initial value functions $V_{\mdpM_1, 0}^{\pi_1}$ and $V_{\mdpM_2, 0}^{\pi_1}$ as initialization, in iteration $m$ for $m > 0$ we have the following:
\begin{align*}
	\norm{V_{\mdpM_1,m}^{\pi_1} - V_{\mdpM_2,m}^{\pi_1}}_\infty \leq \Big( \epsilon_r \cdot \rmax + \gamma \cdot \epsilon_p \cdot V_{\textnormal{max}} \Big) \cdot \Big(\sum_{i=0}^{m-1} \gamma^i\Big)    
\end{align*}
	
Without loss of generality, let us assume that we start with zero-valued vectors for $V_{\mdpM_1, 0}^{\pi_1}$ and $V_{\mdpM_2, 0}^{\pi_1}$ at initialization, we can then write:
\begin{align*}
		V_{\mdpM_1,1}^{\pi_1}(s) = R_1(s) + \sum_{a \in A} \pi_1(a~|~s) \cdot \gamma \cdot \sum_{s' \in S} T_1(s'~|~s,a) \cdot V_{\mdpM_1,0}^{\pi_1}(s') = R_1(s) \\ 
		V_{\mdpM_2,1}^{\pi_1}(s) = R_2(s) + \sum_{a \in A} \pi_1(a~|~s) \cdot \gamma \cdot \sum_{s' \in S} T_2(s'~|~s,a)\cdot V_{\mdpM_2,0}^{\pi_1}(s') = R_2(s)
\end{align*}
	
From above, we get the following base case of induction for $m=1$: 
\begin{align*}
	\norm{V_{\mdpM_1,1}^{\pi_1} - V_{\mdpM_2,1}^{\pi_1}}_\infty = \norm{R_1 - R_2}_\infty \leq \epsilon_r \cdot \rmax  \leq \epsilon_r \cdot \rmax + \gamma \cdot \epsilon_p  \cdot V_{\textnormal{max}}
\end{align*}
	
For completing the proof by induction, we will assume that the claim holds for $m$ and then we prove it for $m+1$. Using Bellman update we can write:
\begin{align}
	V_{\mdpM_1,m+1}^{\pi_1}(s) = R_1(s) + \sum_{a \in A} \pi_1(a~|~s) \cdot \gamma \cdot \sum_{s' \in S} T_1(s'~|~s,a) \cdot V_{\mdpM_1,m}^{\pi_1}(s') \label{appendix.th2.eq.Vt.1}\\ 
	V_{\mdpM_2,m+1}^{\pi_1}(s) = R_2(s) + \sum_{a \in A} \pi_1(a~|~s) \cdot \gamma \cdot \sum_{s' \in S} T_2(s'~|~s,a) \cdot V_{\mdpM_2,m}^{\pi_1}(s')  \label{appendix.th2.eq.Vt.2}
\end{align}
	
When we subtract two expressions \eqref{appendix.th2.eq.Vt.1} and \eqref{appendix.th2.eq.Vt.2}, we get:
\begin{align} 
&\Big|V_{\mdpM_1,m+1}^{\pi_1}(s) - V_{\mdpM_2,m+1}^{\pi_1}(s)\Big| \notag \\
&= \Big|R_1(s)-R_2(s)  + \sum_{a \in A} \pi_1(a~|~s) \cdot \gamma \cdot \sum_{s' \in S} \Big(T_1(s'~|~s,a) \cdot V_{\mdpM_1,m}^{\pi_1}(s') -T_2(s'~|~s,a) \cdot V_{\mdpM_2,m}^{\pi_1}(s')\Big)\Big| \notag \\
&\leq \Big|R_1(s)-R_2(s)\Big| + \Big|\sum_{a \in A} \pi_1(a~|~s) \cdot \gamma \cdot \sum_{s' \in S} \Big(T_1(s'~|~s,a) \cdot V_{\mdpM_1,m}^{\pi_1}(s') -T_2(s'~|~s,a) \cdot V_{\mdpM_2,m}^{\pi_1}(s')\Big)\Big| \notag \\
& =\Big|R_1(s)-R_2(s)\Big| + \Big|\sum_{a \in A} \pi_1(a~|~s) \cdot \gamma \cdot \sum_{s' \in S} \Big(T_1(s'~|~s,a) \cdot V_{\mdpM_1,m}^{\pi_1}(s') - T_2(s'~|~s,a) \cdot V_{\mdpM_1,m}^{\pi_1}(s') \notag \\
& \qquad \qquad \qquad \qquad \qquad \qquad \qquad \qquad \qquad  + T_2(s'~|~s,a) \cdot V_{\mdpM_1,m}^{\pi_1}(s') - T_2(s'~|~s,a) \cdot V_{\mdpM_2,m}^{\pi_1}(s')\Big)\Big| \notag \\
& = \Big|R_1(s)-R_2(s)\Big| + \Big|\sum_{a \in A} \pi_1(a~|~s) \cdot \gamma \cdot \sum_{s' \in S} \Big( V_{\mdpM_1,m}^{\pi_1}(s') \cdot \big(T_1(s'~|~s,a) - T_2(s'~|~s,a)\big) \notag \\
& \qquad \qquad \qquad \qquad \qquad \qquad \qquad \qquad \qquad  + T_2(s'~|~s,a) \cdot  \big(V_{\mdpM_1,m}^{\pi_1}(s') - V_{\mdpM_2,m}^{\pi_1}(s')\big)\Big)\Big| \notag \\
& \leq \Big|R_1(s)-R_2(s)\Big| + \sum_{a \in A} \pi_1(a~|~s) \cdot \gamma \cdot \sum_{s' \in S} \Big(V_{\mdpM_1,m}^{\pi_1}(s') \cdot \Big|T_1(s'~|~s,a)-T_2(s'~|~s,a)\Big|  \notag \\
& \qquad \qquad \qquad \qquad \qquad \qquad \qquad \qquad \qquad  + T_2(s'~|~s,a) \cdot \Big|V_{\mdpM_1,m}^{\pi_1}(s') - V_{\mdpM_2,m}^{\pi_1}(s')\Big|\Big) \notag \\
& \leq \epsilon_r \cdot \rmax + \sum_{a \in A} \pi_1(a~|~s) \cdot \gamma \cdot \sum_{s' \in S} \Big( V_{\textnormal{max}} \cdot \Big|T_1(s'~|~s,a)-T_2(s'~|~s,a)\Big| + T_2(s'~|~s,a) \cdot \norm{V_{\mdpM_1,m}^{\pi_1} - V_{\mdpM_2,m}^{\pi_1}}_\infty\Big) \notag \\
& \leq \epsilon_r \cdot \rmax + \sum_{a \in A} \pi_1(a~|~s) \cdot  \gamma \cdot \Big(V_{\textnormal{max}} \cdot  \max_{s'' \in S,a' \in A}\{ \norm{T_1(\cdot~|~s'',a')-T_2(\cdot~|~s'',a')}_1\}\Big) \notag \\
& \qquad \qquad \ \ \ + \sum_{a \in A} \pi_1(a~|~s) \cdot \gamma \cdot \Big(\sum_{s' \in S} T_2(s'~|~s,a) \cdot \norm{V_{\mdpM_1,m}^{\pi_1} - V_{\mdpM_2,m}^{\pi_1}}_\infty\Big) \notag \\
& = \epsilon_r \cdot \rmax + \gamma \cdot \epsilon_p \cdot V_{\textnormal{max}} + \sum_{a \in A} \pi_1(a~|~s) \cdot \gamma \cdot \Big( \sum_{s' \in S} T_2(s'~|~s,a) \cdot \norm{V_{\mdpM_1,m}^{\pi_1} - V_{\mdpM_2,m}^{\pi_1}}_\infty\Big)  \label{appendix.th2.eq.induction1}  \\
& \leq \epsilon_r \cdot \rmax + \gamma \cdot \epsilon_p \cdot V_{\textnormal{max}} + \sum_{a \in A} \pi_1(a~|~s) \cdot \gamma \cdot \bigg(\sum_{s' \in S}T_2(s'~|~s,a) \cdot \Big( \epsilon_r \cdot \rmax + \gamma \cdot \epsilon_p \cdot V_{\textnormal{max}}\Big) \cdot \Big(\sum_{i=0}^{m-1} \gamma^i\Big)\bigg)   \label{appendix.th2.eq.induction2} \\
& = \Big( \epsilon_r \cdot \rmax + \gamma\cdot \epsilon_p \cdot V_{\textnormal{max}} \Big) \cdot \Big(\sum_{i=0}^{m} \gamma^i\Big) \notag
\end{align}

Note that we went from expression \eqref{appendix.th2.eq.induction1} to \eqref{appendix.th2.eq.induction2} by using the induction assumption that the condition holds at iteration $m$. Now, with $m \rightarrow \infty$ at the convergence of policy iteration algorithm, we can write:
\begin{align*}
		\norm{V_{\mdpM_1}^{\pi_1} - V_{\mdpM_2}^{\pi_1}}_\infty & \leq \Big(\epsilon_r \cdot \rmax + \gamma \cdot \epsilon_p \cdot V_{\textnormal{max}}\Big)\cdot \Big(\sum_{i=0}^{\infty} \gamma^i\Big) \\
		& = \frac{\Big(\epsilon_r \cdot \rmax + \gamma \cdot \epsilon_p \cdot V_{\textnormal{max}}\Big)}{1-\gamma} \\
		 & \leq \frac{\epsilon_r \cdot \rmax}{1-\gamma} + \frac{\gamma \cdot \epsilon_p \cdot \rmax}{(1-\gamma)^2}    
\end{align*}
where we used the fact that $V_{\textnormal{max}} \leq \frac{\rmax}{1 - \gamma}$ when rewards are in the range $[0, \rmax]$. This completes the proof.
\end{proof}

\subsection{Relation between smoothness of \agentone~policies and perceived transition dynamics by \agenttwo}
For our problem setting with parameter space $\Theta$ and $\theta_1, \theta_2 \in \Theta$, let us define two MDPs $\mathcal{M}(\theta_1) := (S, A, T_{\theta_1}, R_{\theta_1}, \gamma, \mathcal{D}_0)$  and $\mathcal{M}(\theta_2) := (S, A, T_{\theta_2}, R_{\theta_2}, \gamma, \mathcal{D}_0)$. Let $\pi^x_{\theta_1}$ and $\pi^x_{\theta_2}$ denotes the policies for \agentone~in these two MDPs. In this section, we will provide technical lemmas which connect the distance between $\pi^x_{\theta_1}$ and $\pi^x_{\theta_2}$ to the approximate-equivalence of these two MDPs in terms of their transition dynamics $T_{\theta_1}$ and $T_{\theta_2}$.

Note that, for $\theta \in \Theta$, $T_{\theta}(s'~|~s, a)=\E_{a^x}[T^{x,y}(s'~|~s, a, a^x)]$, where $a^x \sim \pi^x_{\theta}(\cdot~|~s)$, i.e., $T_{\theta}(s'~|~s, a)$ corresponds to the transition dynamics derived from a two agent MDP for which \agentone{}'s policy is $\pi^x_{\theta}$. 

In Lemma~\ref{appendix.thm2.lm-smoothness-simple} below, we provide a simplified result that captures the relation between smoothness of \agentone~policies and perceived transition dynamics by \agenttwo. Then, we provide a more generic result in Lemma~\ref{appendix.thm2.lm-smoothness-generic} which also accounts for the influence property of two-agent MDP (see Equation~\eqref{eq.influence}).


\begin{lemma} \label{appendix.thm2.lm-smoothness-simple}
Consider two MDPs $\mathcal{M}(\theta_1) := (S, A, T_{\theta_1}, R_{\theta_1}, \gamma, \mathcal{D}_0)$  and $\mathcal{M}(\theta_2) := (S, A, T_{\theta_2}, R_{\theta_2}, \gamma, \mathcal{D}_0)$. Let $\pi^x_{\theta_1}$ and $\pi^x_{\theta_2}$ denotes the policies for \agentone in these two MDPs. Here, for $\theta \in \Theta$, $T_{\theta}(s'~|~s, a)=\E_{a^x}[T^{x,y}(s'~|~s, a, a^x)]$, where $a^x \sim \pi^x_{\theta}(\cdot~|~s)$, i.e., $T_{\theta}(s'~|~s, a)$ corresponds to the transition dynamics derived from a two agent MDP for which \agentone{}'s policy is $\pi^x_{\theta}$. Then, the following holds:
\begin{align}
		\max_{a \in A,s \in S} \norm{T_{\theta_1}(\cdot~|~s, a) - T_{\theta_2}(\cdot~|~s, a)}_1 \leq \max_{s \in S} \norm{\pi^x_{\theta_1}(\cdot~|~s) - \pi^x_{\theta_2}(\cdot~|~s)}_1 \label{appendix.thm2.lm-smoothness-simple.eq}
\end{align}
\end{lemma}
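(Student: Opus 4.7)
The plan is to unfold the definition $T_\theta(s'\,|\,s,a) = \sum_{a^x} \pi^x_\theta(a^x\,|\,s)\, T^{x,y}(s'\,|\,s,a,a^x)$, apply the triangle inequality, and exploit the fact that $T^{x,y}(\cdot\,|\,s,a,a^x)$ is a probability distribution that sums to one over $s'$. I expect no real obstacle; the only subtlety is bookkeeping with the order of summation.

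First, I would fix arbitrary $s \in S$ and $a \in A$ and write the pointwise difference as
\begin{align*}
T_{\theta_1}(s'\,|\,s,a) - T_{\theta_2}(s'\,|\,s,a) = \sum_{a^x \in A} \bigl(\pi^x_{\theta_1}(a^x\,|\,s) - \pi^x_{\theta_2}(a^x\,|\,s)\bigr)\, T^{x,y}(s'\,|\,s,a,a^x).
\end{align*}
Summing $|\cdot|$ over $s'$ and applying the triangle inequality (valid because $T^{x,y} \geq 0$), I obtain
\begin{align*}
\sum_{s'} \bigl| T_{\theta_1}(s'\,|\,s,a) - T_{\theta_2}(s'\,|\,s,a) \bigr|
\leq \sum_{s'} \sum_{a^x} \bigl|\pi^x_{\theta_1}(a^x\,|\,s) - \pi^x_{\theta_2}(a^x\,|\,s)\bigr|\, T^{x,y}(s'\,|\,s,a,a^x).
\end{align*}

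Second, I would swap the order of summation and use $\sum_{s'} T^{x,y}(s'\,|\,s,a,a^x) = 1$ for every $(s,a,a^x)$. This collapses the inner sum to $1$ and gives
\begin{align*}
\|T_{\theta_1}(\cdot\,|\,s,a) - T_{\theta_2}(\cdot\,|\,s,a)\|_1
\leq \sum_{a^x} \bigl|\pi^x_{\theta_1}(a^x\,|\,s) - \pi^x_{\theta_2}(a^x\,|\,s)\bigr|
= \|\pi^x_{\theta_1}(\cdot\,|\,s) - \pi^x_{\theta_2}(\cdot\,|\,s)\|_1.
\end{align*}
The right-hand side no longer depends on $a$, so taking the maximum over $a \in A$ and $s \in S$ on the left and the maximum over $s \in S$ on the right yields the claim of Equation~\eqref{appendix.thm2.lm-smoothness-simple.eq}. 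This is where the lemma's statement benefits from the fact that $a$ affects both sides of the two-agent transition only through $T^{x,y}$, whose rows are probability distributions regardless of $a$.
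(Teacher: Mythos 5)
Your proof is correct and follows essentially the same route as the paper's: both unfold $T_\theta$ as a $\pi^x_\theta$-weighted mixture of $T^{x,y}$, apply the triangle inequality to the $\ell_1$ norm of the difference, and use $\norm{T^{x,y}(\cdot\,|\,s,a,a^x)}_1 = 1$ to collapse the bound to the policy distance. The only cosmetic difference is that you expand the sum over $s'$ explicitly and swap summation order, whereas the paper manipulates the $\ell_1$ norms directly.
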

\begin{proof}
The proof follows by starting from the left-hand side of \eqref{appendix.thm2.lm-smoothness-simple.eq} and then applying a set of algebraic rules and properties of probability distributions to arrive at the right-hand side.

\begin{align*}
		&\max_{a \in A,s \in S} \norm{T_{\theta_1}(\cdot~|~s, a) - T_{\theta_2}(\cdot~|~s, a)}_1 \\
		& =  \max_{a \in A, s \in S}  \norm{\sum_{a^x \in A} \pi^x_{\theta_1}(a^x~|~s) \cdot T^{x,y}(\cdot~|~s,a,a^x) - \sum_{a^x \in A} \pi^x_{\theta_2}(a^x~|~s) \cdot T^{x,y}(\cdot~|~s,a,a^x)}_1 \\
		& = \max_{a \in A, s \in S}  \norm{\sum_{a^x \in A} T^{x,y}(\cdot~|~s,a,a^x)\cdot \Big(\pi^x_{\theta_1}(a^x~|~s)-\pi^x_{\theta_2}(a^x~|~s)\Big)}_1 \\
		& \leq  \max_{a \in A, s \in S}  \sum_{a^x \in A} \norm{T^{x,y}(\cdot~|~s,a,a^x)}_1 \cdot \Big|\pi^x_{\theta_1}(a^x~|~s) - \pi^x_{\theta_2}(a^x~|~s) \Big| \\
		& = \max_{a \in A, s \in S}  \sum_{a_x \in A} \Big|\pi^x_{\theta_1}(a^x~|~s) - \pi^x_{\theta_2}(a^x~|~s)\Big| \\
		& =  \max_{s \in S} \norm{\pi^x_{\theta_1}(\cdot~|~s) - \pi^x_{\theta_2}(\cdot~|~s)}_1
\end{align*}
	\end{proof}

Next, we will provide a more generic result which also accounts for the influence property of two-agent MDP (see Equation~\eqref{eq.influence}) as introduced by \cite{dimitrakakis2017multi} and also used subsequently in other works analysing two-agent MDPs (e.g., \cite{radanovic2019learning}). Recall that, this property captures how much one agent can affect probability distribution of next state with her actions as perceived by the second agent.  As defined in Equation~\eqref{eq.influence}, the influence property is given by: 
\begin{align*}
	\mathcal{I}_{x} := \max_{s \in S} \bigg( \max_{a}\max_{b,b'} \norm{T^{x,y}(.|s,a,b) - T^{x,y}(.|s,a,b')}_1 \bigg)
\end{align*}
where $a$ represents action of \agenttwo~, $b,b'$ represents two distinct actions of \agentone, and $T^{x,y}$ here represents the dynamics of a two-agent MDP as used above (also, see Section~\ref{sec.setup.model}). 

Below, we provide a more generic result of Lemma~\ref{appendix.thm2.lm-smoothness-simple} which additionally accounts for the influence. This lemma is based on the results by \cite{dimitrakakis2017multi} . 

\begin{lemma}[Adapted from \cite{dimitrakakis2017multi}]\label{appendix.thm2.lm-smoothness-generic}
Consider two MDPs $\mathcal{M}(\theta_1) := (S, A, T_{\theta_1}, R_{\theta_1}, \gamma, \mathcal{D}_0)$  and $\mathcal{M}(\theta_2) := (S, A, T_{\theta_2}, R_{\theta_2}, \gamma, \mathcal{D}_0)$. Let $\pi^x_{\theta_1}$ and $\pi^x_{\theta_2}$ denotes the policies for \agentone in these two MDPs. Here, for $\theta \in \Theta$, $T_{\theta}(s'~|~s, a)=\E_{a^x}[T^{x,y}(s'~|~s, a, a^x)]$, where $a^x \sim \pi^x_{\theta}(\cdot~|~s)$, i.e., $T_{\theta}(s'~|~s, a)$ corresponds to the transition dynamics derived from a two agent MDP for which \agentone{}'s policy is $\pi^x_{\theta}$. Then, the following holds:
\begin{align}
		\max_{a \in A,s \in S} \norm{T_{\theta_1}(\cdot~|~s, a) - T_{\theta_2}(\cdot~|~s, a)}_1 \leq \mathcal{I}_{x} \cdot \max_{s \in S} \norm{\pi^x_{\theta_1}(\cdot~|~s) - \pi^x_{\theta_2}(\cdot~|~s)}_1 \label{appendix.thm2.lm-smoothness-generic.eq}
\end{align}
\end{lemma}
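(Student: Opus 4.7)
My plan is to mirror the proof of Lemma~\ref{appendix.thm2.lm-smoothness-simple} but insert a centering step that exploits the fact that $\pi^x_{\theta_1}(\cdot~|~s)$ and $\pi^x_{\theta_2}(\cdot~|~s)$ are both probability distributions. This extra structure—namely, that the coefficients $\pi^x_{\theta_1}(a^x~|~s)-\pi^x_{\theta_2}(a^x~|~s)$ sum to zero—is precisely what will let the influence parameter $\mathcal{I}_x$ appear as a multiplicative factor and tighten the naive bound.

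Concretely, for any fixed $s\in S$ and $a\in A$, I would begin with the identity
\begin{align*}
T_{\theta_1}(\cdot~|~s,a) - T_{\theta_2}(\cdot~|~s,a) = \sum_{a^x\in A} T^{x,y}(\cdot~|~s,a,a^x)\bigl(\pi^x_{\theta_1}(a^x~|~s) - \pi^x_{\theta_2}(a^x~|~s)\bigr),
\end{align*}
which follows directly from the definition of $T_\theta$. Since the signed coefficients sum to zero, I can pick any reference action $b_0\in A$ and subtract $T^{x,y}(\cdot~|~s,a,b_0)$ from each summand without changing the value of the sum. After this centering step, the triangle inequality in $\ell_1$ yields
\begin{align*}
\norm{T_{\theta_1}(\cdot~|~s,a) - T_{\theta_2}(\cdot~|~s,a)}_1 \leq \sum_{a^x\in A}\norm{T^{x,y}(\cdot~|~s,a,a^x) - T^{x,y}(\cdot~|~s,a,b_0)}_1\cdot\bigl|\pi^x_{\theta_1}(a^x~|~s) - \pi^x_{\theta_2}(a^x~|~s)\bigr|.
\end{align*}
By the definition of $\mathcal{I}_x$ in Equation~\eqref{eq.influence}, each factor in the first norm is bounded by $\mathcal{I}_x$, and the remaining sum equals $\norm{\pi^x_{\theta_1}(\cdot~|~s) - \pi^x_{\theta_2}(\cdot~|~s)}_1$. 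Taking the maximum over $s\in S$ and $a\in A$ then gives exactly the desired inequality.

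The main obstacle here is conceptual rather than technical: one has to notice that subtracting a common reference $T^{x,y}(\cdot~|~s,a,b_0)$ is legitimate precisely because the weights form a signed measure with total mass zero. Without this step, one recovers only Lemma~\ref{appendix.thm2.lm-smoothness-simple}, since each $\norm{T^{x,y}(\cdot~|~s,a,a^x)}_1=1$ and the influence $\mathcal{I}_x$ never enters. Everything else reduces to a single application of the triangle inequality, so I do not anticipate any additional technical difficulties.
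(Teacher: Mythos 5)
Your proof is correct, and it is exactly the argument the paper has in mind: the paper does not actually spell out a proof of Lemma~\ref{appendix.thm2.lm-smoothness-generic}, saying only that it ``follows along the basic ideas'' of Lemma~\ref{appendix.thm2.lm-smoothness-simple} but ``requires a more detailed application of rules'' and deferring to \cite{dimitrakakis2017multi}. The centering step you identify --- subtracting a fixed reference kernel $T^{x,y}(\cdot~|~s,a,b_0)$, which is legitimate because the coefficients $\pi^x_{\theta_1}(a^x~|~s)-\pi^x_{\theta_2}(a^x~|~s)$ sum to zero --- is precisely that missing detail, and after it the triangle inequality together with the definition of $\mathcal{I}_x$ in Equation~\eqref{eq.influence} gives the stated bound. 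Your write-up is in fact more complete than what appears in the paper.
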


Note that $\mathcal{I}_{x} \in [0, 1]$ and hence Lemma~\ref{appendix.thm2.lm-smoothness-generic} strictly generalizes  Lemma~\ref{appendix.thm2.lm-smoothness-simple}. Influence property allows us to account for more fine-grained aspects of the problem setting in the performance analysis. For instance, when $\mathcal{I}_{x} = 0$, then \agentone~does not affect the transition dynamics as perceived by \agenttwo~and we can expect to have better performance for \agenttwo. The proof of  Lemma~\ref{appendix.thm2.lm-smoothness-generic} follows along the basic ideas as used in the proof of Lemma~\ref{appendix.thm2.lm-smoothness-simple}, however, requires a more detailed application of rules to bring in the $\mathcal{I}_{x}$ in the bound \cite{dimitrakakis2017multi}.
\subsection{Putting it together to prove the theorem}
\begin{proof}
Let us begin by recalling the setting we are considering in the theorem. We have $\theta^{\textnormal{test}} \in \Theta$ as the type of \agentone~at test time and \agenttwo~uses a policy $\pi^*_{\hat{\theta}}$ such that $||\theta^{\textnormal{test}} - \hat{\theta}|| \leq \epsilon$. Parameters $(\alpha, \beta, \mathcal{I}_{x})$ characterize the smoothness as defined in Section~\ref{sec:performance-analysis}. $r_{\textnormal{max}}$ denotes the maximum value of reward. 

Below, we will show that the two MDPs $\mdpM(\hat{\theta})$ and $\mdpM(\theta^{\textnormal{test}})$ are $(\epsilon_r, \epsilon_p)$ approximately-equivalent (see Definition~\ref{appendix.def.equivalentMDPs}) for the following values of $\epsilon_r$ and $\epsilon_p$:
\begin{itemize}
	\item $\epsilon_r =  ||\theta^{\textnormal{test}} - \hat{\theta}|| \cdot \alpha \leq \epsilon \cdot \alpha$ given the smoothness assumption on the parametric MDP $\mathcal{M}(\theta)$  w.r.t. the rewards.
	\item $\epsilon_p \leq  \sqrt{2 \cdot \beta \cdot ||\theta^{\textnormal{test}} - \hat{\theta}||} \leq \sqrt{2 \cdot \beta \cdot \epsilon}$ given the smoothness assumption on policies for \agentone~w.r.t.\ parameter $\theta$. Here, we have used the Pinsker's inequality \cite{pinsker1964information} stating that If $P$  and  $Q$ are two probability distributions on a measurable space, then $\norm{P - Q}_1 ~\leq~ \sqrt{2 \cdot  \textrm{KL}(P,Q)}$.
\end{itemize}

Then, by applying the results from Lemma~\ref{lm:eps-eq} and Lemma~\ref{appendix.thm2.lm-smoothness-generic}, we get the desired result  that the total utility achieved by \agenttwo~ has the following guarantees:
\vspace{-2mm}
\begin{align*}
    J_{\theta^{\textnormal{test}}}(\pi^*_{\hat{\theta}}) \geq J_{\theta^{\textnormal{test}}}(\pi^*_{\theta^{\textnormal{test}}}) - \frac{\epsilon \cdot \alpha \cdot r_{\textnormal{max}}}{1-\gamma}
    - \frac{\mathcal{I}_{x} \cdot \sqrt{2 \cdot \beta \cdot \epsilon} \cdot r_{\textnormal{max}}}{(1-\gamma)^{2}}
\vspace{-2mm}
\end{align*}
\end{proof}
}
}
{}

\end{document}